\documentclass[11pt]{article}

\usepackage[utf8]{inputenc}

\usepackage[preprint]{acl}

\usepackage{times}
\usepackage{latexsym}

\usepackage[T1]{fontenc}

\usepackage[utf8]{inputenc}

\usepackage{microtype}

\usepackage{inconsolata}

\usepackage{graphicx}

%
%

\usepackage{siunitx}
\usepackage{url}
\usepackage{nicematrix}
\usepackage{mathtools}





\usepackage{amsmath}
\usepackage{amssymb}
\usepackage{mathtools}
\usepackage{amsthm}

\usepackage{xspace}
\usepackage{enumitem}
\usepackage{float}

\usepackage[capitalize,noabbrev]{cleveref}


\usepackage{amsthm}
\newtheorem{definition}{Definition}
\newtheorem{thm}{Theorem}
\newtheorem{proposition}{Proposition}

\newtheorem{assumption}{Assumption}
\newtheorem{remark}{Remark}
\newtheorem{hypothesis}{Hypothesis}
\newtheorem{corollary}{Corollary}

\newenvironment{proofsketch}{%
  \proof}{\endproof}

\setlength {\marginparwidth }{3.2cm}  
\usepackage[textsize=tiny]{todonotes}


\usepackage{xparse}  
\newcommand{\R}{\mathbb{R}}
\newcommand{\Loss}{\mathcal{L}}
\newcommand{\E}[1]{\mathbb{E}\left[#1\right]}
\NewDocumentCommand{\W}{m o}{{\mW^{(#1)}\IfValueTF{#2}{_{#2}}{}}}
\newcommand{\tp}{^\intercal}
\newcommand{\zero}{\mathbf{0}}
\newcommand{\mat}[4]{
    \begin{bNiceArray}{c|c}[margin]
      {#1} & {#2} \\
      \hline
      {#3} & {#4}
    \end{bNiceArray}
}
\newcommand{\df}[2]{\frac{\partial #1}{\partial #2}}
\newcommand{\defn}[1]{\textbf{#1}}

\newcommand{\colourbase}{black}
\newcommand{\mymacro}[2]{\newcommand{#1}{{\color{\colourbase}#2}}}


\usepackage{amsmath,amsfonts,bm}









\def\eqref#1{equation~\ref{#1}}









\def\1{\bm{1}}








\def\valpha{{\bm{\alpha}}}
\def\vbeta{{\bm{\beta}}}
\def\vgamma{{\bm{\gamma}}}
\def\va{{\bm{a}}}
\def\vb{{\bm{b}}}

\def\vp{{\bm{p}}}

\def\vt{{\bm{t}}}

\def\vy{{\bm{y}}}



\def\mE{{\bm{E}}}

\def\mH{{\bm{H}}}
\def\mI{{\bm{I}}}

\def\mM{{\bm{M}}}

\def\mP{{\bm{P}}}

\def\mS{{\bm{S}}}
\def\mT{{\bm{T}}}

\def\mW{{\bm{W}}}
\def\mX{{\bm{X}}}
\def\mY{{\bm{Y}}}

\DeclareMathAlphabet{\mathsfit}{\encodingdefault}{\sfdefault}{m}{sl}
\SetMathAlphabet{\mathsfit}{bold}{\encodingdefault}{\sfdefault}{bx}{n}













\newcommand{\softmax}{\mathrm{softmax}}



\title{On the Emergence of Induction Heads for In-Context Learning}



\author{Tiberiu Musat\thanks{Corresponding author. Email: \texttt{tmusat@ethz.ch}.}\>\qquad Tiago Pimentel\qquad Lorenzo Noci\qquad Alessandro Stolfo \\
\textbf{Mrinmaya Sachan \qquad Thomas Hofmann}  \\
ETH Zürich }


\begin{document}
\maketitle
\begin{abstract}
  Transformers have become the dominant architecture for natural language processing. Part of their success is owed to a remarkable capability known as \textit{in-context learning} (ICL): they can acquire and apply novel associations solely from their input context, without any updates to their weights. In this work, we study the emergence of \textit{induction heads}, a previously identified mechanism in two-layer transformers that is particularly important for in-context learning. We uncover a relatively simple and interpretable structure of the weight matrices implementing the induction head. We theoretically explain the origin of this structure using a minimal ICL task formulation and a modified transformer architecture. We give a formal proof that the training dynamics remain constrained to a 19-dimensional subspace of the parameter space. Empirically, we validate this constraint while observing that only 3 dimensions account for the emergence of an induction head. By further studying the training dynamics inside this 3-dimensional subspace, we find that the time until the emergence of an induction head follows a tight asymptotic bound that is quadratic in the input context length.
\end{abstract}

\section{Introduction}


%
%
%

LLMs possess a remarkable ability known as \textit{in-context learning} (ICL). A well-trained language model can learn and apply novel associations from their input context, without additional parameter updates \citep{brown2020language}. 
%
%

Previous work by \citet{olsson2022context} traces back transformers' ICL capabilities to a learned mechanism termed \textit{induction head}: a pair of attention heads that implement a simple but powerful copying rule $\left[\; \ldots, A, B, \ldots, A \;\right] \to B$. Empirical work has shown that the emergence of induction heads co-occurs with a sharp decrease in the training loss and an increase in ICL accuracy \citep{olsson2022context, reddy2023mechanistic}.

\citet{reddy2023mechanistic} has proposed a three-parameter model of the induction head that can successfully account for the phenomena of abrupt learning and data distributional effects. However, the three parameters are inspired by the attention patterns of an induction head without a clear connection to the learned weights of a transformer. On the other hand, existing theoretical works that study the evolution of the weights during training are limited to simplified staged learning algorithms \citep{nichani2024transformers, bietti2024birth}.
%

%
Our work aims to illuminate the precise training dynamics during the emergence of induction heads. To answer this question, we study the training dynamics of an autoregressive two-layer transformer using a minimal in-context learning task and a simplified architecture. We uncover a precise description of the model weights during training and a tight bound on the learning time.

Concretely, our contributions are as follows:
\begin{enumerate}[topsep=0.5ex,itemsep=0ex,partopsep=1ex,parsep=1ex]
    \item Using a minimal ICL formulation, we give a formal proof that training dynamics induce a simplified structure of the weights (\cref{sec:full_dynamics}). The evolution of model weights stays within a 19-dimensional subspace of the entire parameter space, regardless of model or task size. We index this subspace by introducing \textbf{19 pseudo-parameters}.
    \item We empirically find that only \textbf{3 pseudo-parameters} are learned at the end of training, corresponding exactly to an induction head (\cref{sec:empirical}). We also find that the emergence of the 3 parameters is \textit{self-contained}, unaided by the presence of the other 16 parameters.
    \item We theoretically study the training dynamics of the induction head, assuming that only the 3 parameters are learnable (\cref{sec:induction_head_dynamics}). We prove that the 3 parameters always emerge in a specific sequence. We also prove asymptotic bounds for the emergence time for each parameter in terms of the context length and a \textbf{tight bound on the total emergence time}.
    \item We train and interpret a standard attention-only transformer on an ICL task (\cref{sec:standard-transformers}). We find a relatively simple \textbf{description of the weight matrices} that implement the induction head. Using progress measures during training, we show that the emergence of induction heads in standard transformers closely resembles the minimal model.
\end{enumerate}
Finally, we also provide empirical validation for our theoretical results.

\section{Transformers and Induction Heads}
\label{sec:induction_heads}

An essential feature of natural language is the presence of \emph{reoccurring associations}. These can be as simple as repeated groups of words, such as the full names of people (e.g. ``Alan Turing'') or even the repeated occurrence of the phrase ``induction heads'' in this very document. However, we also observe the reoccurrence of very abstract associations, a phenomenon related to \emph{in-context learning}. Specific examples include fact retrieval, translation, and pattern matching \citep{olsson2022context}.


\mymacro{\token}{v}
\mymacro{\tokena}{a}
\mymacro{\tokenb}{b}
\mymacro{\tokens}{\mathbf{v}}
\mymacro{\vocab}{\mathcal{V}}
\mymacro{\btheta}{\boldsymbol{\theta}}
\mymacro{\ptheta}{p_{\btheta}}
\newcommand{\writemore}{\textcolor{red}{... (writemore)}\xspace}
\newcommand{\addcites}{\textcolor{red}{(addcites)}\xspace}
\mymacro{\embeddings}{\mathbf{X}}
\mymacro{\lookup}{\mathtt{LookUp}}
\mymacro{\hiddenstate}{\mathbf{h}}
\mymacro{\hiddenstates}{\mathbf{H}}
\mymacro{\selfattention}{\mathtt{SelfAttention}}
\mymacro{\layer}{\ell}
\mymacro{\layers}{L}
\mymacro{\unembedding}{\mathbf{W_{\texttt{unemb}}}}
\mymacro{\hiddensize}{d}

Let $\tokens \in \vocab^*$ be a sequence of tokens in a language model's vocabulary $\vocab$.
In language modeling, our goal is typically to characterize the distribution $\ptheta(\token_t \mid \tokens_{<t})$.
Typically, this is implemented in a transformer via a series of operations.
First, we map a model's input tokens $\tokens_{<t}$ to embeddings $\embeddings \in \R^{\hiddensize \times t}$; this is done via a lookup operation:
\begin{align}
    \embeddings = \lookup(\tokens_{<t})
\end{align}
Afterwards, the model implements a series of self-attention operations:
\begin{align}
    \hiddenstates^{\layer} = \selfattention(\hiddenstates^{\layer-1})
\end{align}
where, by definition, $\hiddenstates^{0} = \embeddings$.
Finally, in a last layer, the model predicts the next token as:
\begin{align}
    \ptheta(\token_t \mid \tokens_{<t}) = \softmax(\unembedding\,\hiddenstates^{\layers})
\end{align}
where $\unembedding \in \R^{|\vocab| \times \hiddensize}$ is known as an un-embedding matrix.

If a pair of tokens
appears in the language model's context $\tokens_{<t}$ as $\token_{t'}\token_{t'+1} = \tokena\tokenb$, this pair is more likely than chance to appear again;
when we have $\token_{t-1} = \tokena$, we may thus want to 
predict $\tokenb$ as our next token in $\ptheta(\token_t \mid \tokens_{<t})$.

\defn{Induction heads} are a specific structure inside transformers that implements exactly this copying behavior. Given a prompt of the form $\ldots a b \ldots a$, an induction head predicts the token which follows the previous occurrence of $a$, in this case being $b$. Note that induction heads are not a modified type of attention head, but rather a mechanism learned by regular attention heads during standard training.

We wish to, after observing $\token_{t-1} = \token_{t'}$, copy the value of $\token_{t'+1}$ into $\token_t$.
Due to the structure of self-attention, however, this requires at least two steps \citep{sanford2024one}.
First, the model must copy the token information from position $t'$ into $t'+1$.
Second, the newly obtained value enables the model to attend to position $t' + 1$ from $t-1$.

\begin{figure}[t!]
  \centering
  \captionsetup{width=.9\linewidth}
  \includegraphics[width=0.8\linewidth]{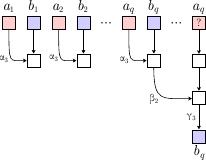}
  \caption{An induction head solving the \textit{in-context learning} (ICL) task. Given a series of item-label pairs, the model predicts the correct label for a query item. The first attention head retrieves the corresponding item for each label, enabling the second attention head to retrieve the correct label. Each path is modulated by one pseudo-parameter ($\alpha_3$, $\beta_2$, or $\gamma_3$).}
  \label{fig:induction-head}
\end{figure}





\section{Induction Heads in Simplified Transformers}

\subsection{Disentangled Transformer}
\label{sec:setup}

In order to understand the emergence of induction heads, we study the training dynamics in a minimal formulation. We propose a simplified, but equally powerful, transformer architecture (depicted in \cref{fig:architecture}) with a \textit{disentangled} residual stream \citep{friedman2023learning}.

\begin{figure}[h]
  \centering
  \captionsetup{width=.9\linewidth}
  \includegraphics[width=0.8\linewidth]{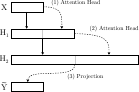} 
  \caption{Our minimal transformer architecture. We use two attention-only layers and a linear layer; we disentangle the attention layers by concatenating the inputs and outputs, rather than adding them together.}
  \label{fig:architecture}
\end{figure}

\subsubsection{Data Distribution}

We use a common ICL task formulation that requires labeling an item based on a list of $N$ item-label pairs \citep{chan2022data, reddy2023mechanistic, hochreiter2001learning}. The $i^{\text{th}}$ pair consists of an item $\va_i \in \R^D$ and a label $\vb_i \in \R^{D}$ with dimensionality $D \in \mathbb{N}$. We ask the model to predict the label for one of the items $\va_q$ where $q\in\{1,\ldots,N\}$.

We annotate each item with a positional embedding $\vp_i \in \R^{D}$ and each label with the rotated positional embedding $\mM \vp_i$, where $\mM \in \R^{D \times D}$. The rotation is fixed before training to create a learnable correlation similar to a sinusoidal embedding \citep{vaswani2017attention}. This enables the attention mechanism to connect the corresponding items and labels. We do not use any positional embedding for the query item. Assuming that $D$ is even, we use
\begin{equation}
    \mM = \begin{bNiceArray}{c|c}[margin]
        \zero_{(D/2) \times (D/2)} & \mI_{D/2}
        \\ \hline
        \mI_{D/2} & \zero_{(D/2) \times (D/2)} 
    \end{bNiceArray},
\end{equation}
where $\mI_{D/2} \in \R^{(D/2) \times (D/2)}$ is the identity matrix.

We concatenate items and labels with their positional embeddings to obtain our data:
\begin{align*}
&\mX_{2i-1, :} = \big[ \; \va_i^\intercal \; | \; \vp_i^\intercal \; \big]^\intercal \quad
\mX_{2i, :} = \big[ \; \vb_i^\intercal \; | \; \vp_i^\intercal \mM \; \big]^\intercal\\
&\forall i \in \{ 1, \ldots, N \} \\[0.5em]
&\mX_{2N+1, :} = \big[ \; \va_q^\intercal \; | \; \zero \; \big]^\intercal
\quad
\vy = \vb_q\\
&q\in \{ 1, \ldots, N \}
\end{align*}
where $\mX \in \R^{(2N+1) \times 2D}$, $\vy \in \R^{D}$, and $[\; \cdot \;|\; \cdot \;]$ denotes concatenation.

We assume a \textit{lexinvariant} language model \citep{huang2023lexinvariant} where items, labels, and positional embeddings are independent and identically distributed. For our theoretical results, we introduce additional assumptions on the distribution of items, labels, and positional embeddings, as needed.

We train our model with mean-squared error loss $\mathcal{L} = \| \, \vy - \tilde{\vy} \, \|^2$ using only the output of the query item located at the last position, i.e. $\tilde{\vy} = \widetilde{\mY}_{2N+1, :}$.

\subsubsection{Architecture}

Our transformer has two single-head attention layers followed by a linear layer. For the attention layers, we use a merged key-query matrix and no value/projection matrix. We disentangle the residual stream by directly concatenating the attention output to the existing residual stream:
\begin{align}
\mH_1 &= \Big[ \; \mX \; \Big| \; \sigma\Big(\mX \W{1} \mX\tp\Big) \, \mX \; \Big] , \\ 
\mH_2 &= \Big[ \; \mH_1 \;\Big| \; \sigma\Big(\mH_1 \W{2} \mH_1\tp\Big) \, \mH_1 \; \Big] , \\
\widetilde{\mY}\; &= \mH_2 \W{3} \, ,
\end{align}
where $[\; \cdot \;|\; \cdot \;]$ denotes matrix concatenation and $\sigma$ denotes the softmax function with autoregressive masking. $\mW^{(1)} \in \mathbb{R}^{2D \times 2D}$, $\mW^{(2)} \in \mathbb{R}^{4D \times 4D}$, $\mW^{(3)} \in \mathbb{R}^{8D \times D}$ are the learnable weights and $\mH_1 \in \mathbb{R}^{(2N + 1) \times 4D}$,  $\mH_2 \in \mathbb{R}^{(2N + 1) \times 8D}$, $\widetilde{\mY} \in \R^{(2N+1) \times D}$ denote the activations and final output.

\paragraph{Justification.}
Merged key-query matrices are commonly used in theoretical works \citep{edelman2024evolution, nichani2024transformers}. We do not use MLPs since they are neither necessary nor useful for the task at hand. The disentangled transformer has the same expressivity as a regular transformer with a large residual dimension.

\subsection{Training Dynamics}
\label{sec:full_dynamics}

Our model has a total of $28 D^2$ parameters. However, as we show below, the training dynamics on our data distribution remain constrained to a 19-dimensional subspace that we index using {19 pseudo-parameters}. Our theoretical result is based on the following assumptions:

\begin{assumption} \label{asspt:zero_initialisation}
    \defn{Zero Initialization.} We assume our neural network is initialized with zero weights, i.e. $\mW^{(1)} = \mathbf{0}$,
        $\mW^{(2)} = \mathbf{0}$, and
        $\mW^{(3)} = \mathbf{0}.$
\end{assumption}

The zero initialization is commonly used in theoretical works \citep{nichani2024transformers, edelman2024evolution}, being motivated as a reasonable approximation for small random initializations.

\begin{assumption} \label{asspt:population_loss}
    \defn{Population Loss.} We assume the network is trained with gradient descent over the entire data distribution at every step:
    \begin{equation}
        \mW^{(k)} \;\leftarrow \mW^{(k)}\; - \lambda \, \mathbb{E}\Biggl[ \frac{\partial \mathcal{L}}{\partial \mW^{(k)}} \Bigg] ,
    \end{equation}
    where $\lambda > 0$ is the learning rate.
\end{assumption}

\begin{assumption} \label{asspt:isotropy}
    \defn{Isotropic Data.} We assume that the data distribution is invariant to orthogonal transformations of items, labels, and positional embeddings:
    \begin{align}
        &f\big(\{\va_i\}, \, \{\vb_i\}, \, \{\vp_i\}, \, q\big) \\
        &\qquad\quad= f\big(\{\mE \va_i\}, \, \{\mE \vb_i\}, \, \{\vp_i\}, \, q\big) \nonumber\\
        &\qquad\quad= f\big(\{\va_i\}, \, \{\vb_i\}, \, \{\mE \vp_i\}, \, q\big) \nonumber,
    \end{align}
    for any orthogonal matrix $\mE \in \mathbb{R}^{D \times D}$, where $f\big(\{\va_i\}, \, \{\vb_i\}, \, \{\vp_i\}, \, q\big)$ is the probability density over the items, labels, positional embeddings, and query index.
\end{assumption}

Note that this assumption is weaker than, for example, assuming a normal distribution, since a normal distribution is isotropic.

\begin{thm}
\label{thm:19-parameters}
    Assume that we train a disentangled transformer from zero initialization with population loss on isotropic data on our ICL task. 
    Then, the weight matrices will have the following structure throughout the entire training process:
        \begin{align*}
            &\mW^{(1)} \;=\;
            \begin{bNiceArray}{c|c}[margin]
              {\valpha_1 \mI} & \mathbf{0} \\
              \hline
              \mathbf{0} & {\valpha_2 \mI + \valpha_3 \mM}
            \end{bNiceArray},
            \\[0.5em]
            &\mW^{(2)} \;=\;\\
            &\begin{bNiceArray}{c|c|c|c}[margin]
              {\vbeta_1 \mI} & \mathbf{0} & {\vbeta_2 \mI} & \mathbf{0} \\
              \hline
              \mathbf{0} & {\vbeta_3 \mI + \vbeta_4 \mM} & \mathbf{0} & {\vbeta_5 \mI + \vbeta_6 \mM} \\
              \hline
              {\vbeta_7 \mI} & \mathbf{0} & {\vbeta_8 \mI} & \mathbf{0} \\
              \hline
              \mathbf{0} & {\vbeta_9 \mI + \vbeta_{10} \mM} & \mathbf{0} & {\vbeta_{11} \mI + \vbeta_{12} \mM}
            \end{bNiceArray},
            \\[0.5em]
            &\mW^{(3)} \;=\;\\
            &\quad\begin{bNiceArray}{c|c|c|c|c|c|c|c}[margin]
              {\vgamma_1 \mI} & \mathbf{0} &
              {\vgamma_2 \mI} & \mathbf{0} &
              {\vgamma_3 \mI} & \mathbf{0} &
              {\vgamma_4 \mI} & \mathbf{0}
            \end{bNiceArray}\tp,
        \end{align*}
    where we collect the parameters of each weight matrix in three vectors $\valpha \in \R^3$, $\vbeta \in \R^{12}$ and $\vgamma \in \R^4$ that vary throughout training.
\end{thm}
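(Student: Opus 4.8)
The plan is to exhibit a compact group $G$ of symmetries of the data distribution, lift it to a linear \emph{isometric} action on the $28D^2$-dimensional parameter space under which the population loss is invariant, and then show that gradient descent started at the zero point — which is $G$-fixed — never leaves the fixed subspace $\mathrm{Fix}(G)$. A Schur's-lemma computation then identifies $\mathrm{Fix}(G)$ with exactly the family of weight triples in the statement, and counts the $3+12+4=19$ free scalars.

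Concretely, I would take $G = O(D) \times C(\mM)$, where $C(\mM) = \{\mE \in O(D) : \mE\mM = \mM\mE\}$ is the centralizer of the fixed involution $\mM$ (nonempty, since $\pm\mI \in C(\mM)$; and, in the eigenbasis of $\mM$, $C(\mM) \cong O(D/2) \times O(D/2)$ acting on the $\pm1$-eigenspaces). A pair $g = (\mE_1,\mE_2) \in G$ acts on a data point by $\va_i \mapsto \mE_1\va_i$, $\vb_i \mapsto \mE_1\vb_i$, $\va_q \mapsto \mE_1\va_q$, $\vp_i \mapsto \mE_2\vp_i$, $\vy \mapsto \mE_1\vy$. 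Because $\mE_2$ commutes with $\mM$, the label position codes transform consistently, $\mM\vp_i \mapsto \mM\mE_2\vp_i = \mE_2\mM\vp_i$, so on the data matrix this is the right multiplication $\mX \mapsto \mX\mR$ by the orthogonal matrix $\mR = \mathrm{blockdiag}(\mE_1^\intercal,\mE_2^\intercal)$, together with $\vy \mapsto \mE_1\vy$ on the target. By \cref{asspt:isotropy} — invoking position-isotropy only along the subgroup $C(\mM)$, which is all that lifts cleanly — the data distribution, hence the population loss $\mathcal{L}(\mW)$, is $G$-invariant.

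Next I would lift $G$ to the parameters. Setting $\mR_1 = \mathrm{blockdiag}(\mR,\mR)$ and $\mR_2 = \mathrm{blockdiag}(\mR,\mR,\mR,\mR)$, I claim that running the network on input $\mX\mR$ with weights $T_g\mW \coloneqq \big(\mR^\intercal\mW^{(1)}\mR,\ \mR_1^\intercal\mW^{(2)}\mR_1,\ \mR_2^\intercal\mW^{(3)}\mE_1^\intercal\big)$ reproduces the original forward pass up to these block rotations: $\mH_1 \mapsto \mH_1\mR_1$, $\mH_2 \mapsto \mH_2\mR_2$, $\widetilde{\mY} \mapsto \widetilde{\mY}\mE_1^\intercal$. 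The one nontrivial point is that each softmax is unchanged, since its argument is the bilinear form $\mH\,\mW\,\mH^\intercal$ and $(\mH\mR')(\mR'^\intercal\mW\mR')(\mH\mR')^\intercal = \mH\mW\mH^\intercal$, with softmax acting along the (untouched) sequence axis. Together with the $G$-invariance of the data this gives $\mathcal{L}\circ T_g = \mathcal{L}$ for every $g \in G$. Each $T_g$ is a linear isometry of $\R^{28D^2}$ (the actions on $\mW^{(1)},\mW^{(2)},\mW^{(3)}$ are two-sided multiplications by orthogonal matrices), so differentiating the identity $\mathcal{L}\circ T_g = \mathcal{L}$ (the loss is smooth in $\mW$) yields $\nabla\mathcal{L}\circ T_g = T_g\circ\nabla\mathcal{L}$. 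Hence at any $\mW \in \mathrm{Fix}(G)$ the population gradient is again $G$-fixed, so the update of \cref{asspt:population_loss} maps $\mathrm{Fix}(G)$ into itself; since $\mathbf{0}\in\mathrm{Fix}(G)$ by \cref{asspt:zero_initialisation}, induction on the step count puts every iterate in $\mathrm{Fix}(G)$.

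It remains to compute $\mathrm{Fix}(G)$ block by block. Each $D\times D$ sub-block of $\mW^{(1)},\mW^{(2)},\mW^{(3)}$ is conjugated on one side by $\mE_1$ or $\mE_2$ and on the other by $\mE_1$ or $\mE_2$, according to whether the corresponding coordinate blocks of the residual stream (and of the output) are of ``content'' or ``position'' type: the $4D$ columns of $\mH_1$ are of types $(\mathrm c,\mathrm p,\mathrm c,\mathrm p)$, the $8D$ columns of $\mH_2$ are $(\mathrm c,\mathrm p)$ repeated four times, and the target $\vy = \vb_q$ is of content type. By Schur's lemma, a block conjugated by all of $O(D)$ on both sides is a scalar multiple of $\mI$; a block conjugated by all of $C(\mM) \cong O(D/2)\times O(D/2)$ on both sides lies in $\mathrm{span}\{P_+,P_-\} = \mathrm{span}\{\mI,\mM\}$, using $P_\pm = \tfrac12(\mI\pm\mM)$; and a ``mixed'' block (one side $O(D)$, the other $C(\mM)$) must vanish, since $-\mI$ lies in both groups and forces the block to equal its own negative. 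Reading this off reproduces the stated forms, with the three scalars $\valpha$ in $\mW^{(1)}$, the twelve $\vbeta$ in $\mW^{(2)}$, and the four $\vgamma$ in $\mW^{(3)}$. I expect the main obstacle to be the lifting step — verifying carefully that the disentangled concatenation propagates the block rotations exactly as claimed and that the quantity preserved through each nonlinearity is precisely $\mH\,\mW\,\mH^\intercal$, so that the forward map (hence the loss) is genuinely $G$-equivariant and not merely invariant in distribution; the induction and the Schur computation are then short and routine.
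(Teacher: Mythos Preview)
Your proposal is correct and rests on the same core idea as the paper: exploit the orthogonal symmetry of the data distribution to show that the population gradient at a structured point is again structured, then identify the structured subspace via a Schur-type argument. The execution differs in two useful ways. First, you package the argument as an isometric $G$-action on parameter space and invoke the general fact that the gradient of a $G$-invariant smooth function is $G$-equivariant; the paper instead verifies this by hand, carrying the rotation through an explicit forward and backward pass under the inductive hypothesis that the weights already have the block form. Your route is shorter and avoids that bookkeeping entirely, since rotating the weights along with the data makes the softmax arguments $\mH\mW\mH^\intercal$ manifestly invariant at \emph{every} parameter value, not just at structured ones. Second, for the positional factor you use the full centralizer $C(\mM)\cong O(D/2)\times O(D/2)$, whereas the paper works with the smaller ``block-orthogonal'' subgroup $\{\mathrm{blockdiag}(\mE,\mE),\ \mathrm{antidiag}(\mE,\mE):\mE\in O(D/2)\}$; both yield the same commutant $\mathrm{span}\{\mI,\mM\}$, so the final count of $3+12+4=19$ scalars agrees. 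The paper's version has the virtue of being entirely elementary and self-contained (its Propositions~1--3 are proved from scratch with sign and permutation matrices), while yours is more conceptual and would generalize more readily to other equivariant architectures.
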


\begin{proofsketch}
    We give an inductive proof by showing that, if weights have the above structure, then their gradients also have the same structure. Since the zero initialization fits the structure, this ensures that the structure is preserved during training.

    To prove the structure of the gradient, we apply a carefully chosen rotation to the entire data distribution. Since the data distribution is isotropic, the rotation will not change the data distribution, so the expected gradient will also remain unchanged.
    
    However, we are also able to show that our rotation induces a specific similarity transformation of the gradient:
    \begin{equation}
        \mathbb{E}\Biggl[ \frac{\partial \mathcal{L}}{\partial \W{k}[ij]} \Bigg]
        \;=\;
        F \; \mathbb{E}\Biggl[ \frac{\partial \mathcal{L}}{\partial \W{k}[ij]} \Bigg] \; F^\intercal ,
    \end{equation}
    where $F$ is an orthogonal or block-orthogonal matrix and $\W{k}[ij]$ is a block of a weight matrix. From this, we are able to show that the expected gradient must have the desired structure. We give the full proof in \cref{app:weight_structure}.
\end{proofsketch}

\begin{figure}[h!]
  \centering
  \captionsetup{width=.9\linewidth}
  \includegraphics[width=\linewidth]{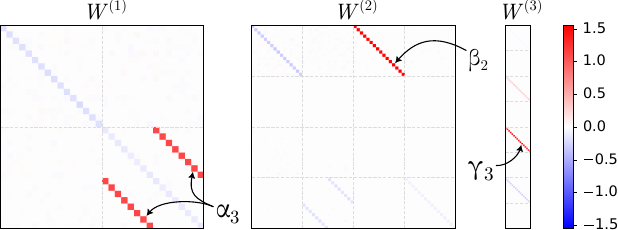}
  \caption{Weights at the end of standard training have the theoretically predicted structure.}
  \label{fig:weights}
\end{figure}

\paragraph{Empirical Validation.}
In \cref{fig:weights}, we confirm our theoretical result by visualizing the weights at the end of training with stochastic gradient descent. We provide the full training details and additional figures in \cref{app:model_weights}.

\subsection{Emergence of Induction Heads}
\label{sec:empirical}

We now proceed to studying the evolution of these 19 pseudo-parameters during training. By observing or ablating specific parameters, we are able to test two hypotheses regarding the emergence of induction heads.

\begin{hypothesis}[due to \citet{olsson2022context}] \defn{Induction Head Phase Transition.}
    Reaching low training loss on our ICL task coincides with the emergence of an induction head.
\end{hypothesis}

We can already see from \cref{fig:weights} that three parameters have a larger magnitude, namely $\valpha_3$, $\vbeta_2$, and $\vgamma_3$. Interestingly, the mechanism performed by these three parameters together corresponds exactly to an induction head. In the first layer, $\valpha_3$ makes each label attend to the preceding item. In the second layer, $\vbeta_2$ makes the query item attend to the correct label based on the newly retrieved item. Finally, $\vgamma_3$ outputs the label retrieved by the second layer. In \cref{fig:training-3d-h1}, we visualize the 19 pseudo-parameters and loss during training, confirming that the drop in loss is driven by the emergence of the induction head.

\begin{hypothesis} \defn{Self-Contained Dynamics.}
    The emergence of the induction head is unaided by the presence of any other parameter.
\end{hypothesis}

By training the model while constraining its parameters to the 3-dimensional subspace spanned by the three parameters, we uncover very similar dynamics. As depicted in \cref{fig:training-3d-h2}, we find that the emergence of the induction head is unaffected, even slightly accelerated. We show a few more plots and full training details in \cref{app:dynamics}.

\begin{figure}[h!]
  \centering
  \captionsetup{width=.9\linewidth}
  \includegraphics[width=0.9\linewidth]{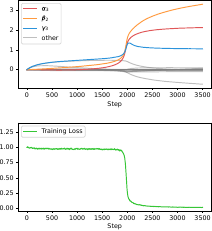}
  \caption{The value of the 19 pseudo-parameters during training \emph{(top)} and the associated training loss \emph{(bottom)}.}
  \label{fig:training-3d-h1}
\end{figure}

\begin{figure}[h!]
  \centering
  \captionsetup{width=.9\linewidth}
  \includegraphics[width=0.9\linewidth]{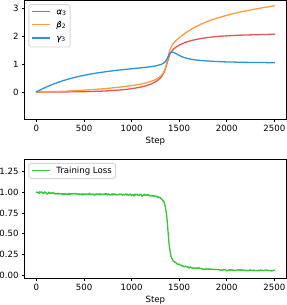}
  \caption{Ablating all parameters except $\valpha_3$, $\vbeta_2$, and $\vgamma_3$ results in strikingly similar dynamics.}
  \label{fig:training-3d-h2}
\end{figure}

\subsection{Training Dynamics of Induction Heads}
\label{sec:induction_head_dynamics}

Motivated by the empirical results in the previous section, we study the training dynamics constrained to the 3-dimensional subspace spanned by $\valpha_3$, $\vbeta_2$, and $\vgamma_3$, finding several tight bounds for the emergence of the induction head.
Specifically, we have the following assumptions:



\begin{assumption} \label{asspt:three_param_model}
    \defn{Three-Parameter Model.} Only parameters $\valpha_3, \vbeta_2,$ and $\vgamma_3$ are learnable.
\end{assumption}

\begin{assumption} \label{asspt:gradient_flow}
    \defn{Gradient Flow.} We study the training dynamics under the assumption of a continuous-time gradient flow with unit learning rate,
    \begin{align*}
        \frac{\partial \alpha_3}{\partial t} = - \frac{\partial \mathcal{L}}{\partial \alpha_3} ,
        \quad
        \frac{\partial \beta_2}{\partial t} = - \frac{\partial \mathcal{L}}{\partial \beta_2} ,
        \quad
        \frac{\partial \gamma_3}{\partial t} = - \frac{\partial \mathcal{L}}{\partial \gamma_3} ,
    \end{align*}
    where $\alpha_3, \,\beta_2, \,\gamma_3 : \mathbb{R}_{\ge0} \to \mathbb{R}$ are the continuous-time trajectories of the three parameters.
\end{assumption}

\begin{assumption} \label{asspt:zero_initialisation}
    \defn{Zero Initialization.} We assume our neural network is initialized with zero weights. Equivalently, $\alpha_3(0) = \beta_2(0) = \gamma_3(0) = 0$.
\end{assumption}

\begin{assumption} \label{asspt:whitened_inputs}
    \defn{Orthonormal Inputs.} We assume that all items, labels, and positional embeddings are orthogonal and have unit norm. Specifically,
    \begin{gather}
        \| \va_i\| = \| \vb_i \| = \|\vp_i\| = 1, \\
        \va_i^\intercal \va_j = \vb_i^\intercal \vb_j = \va_i^\intercal \vb_i = 0, \\
        \vp_i^\intercal \vp_j = \vp_i^\intercal \mM  \vp_i = \vp_i^\intercal \mM  \vp_j = 0 ,
    \end{gather}
    for all $i,j \in \{ 1, 2, \ldots, N\}, i\ne j$.
\end{assumption}

Note that this assumption requires $D \ge 2N$. There are two ways to motivate this assumption, either by preprocessing the inputs using a whitening transformation, or by considering a very large dimension $ D\rightarrow\infty$ and vectors sampled from an i.i.d. Gaussian with variance $1 / \sqrt{D}$.

\begin{assumption} \label{asspt:query_last}
    \defn{Query Last.} We assume that the query item always refers to the last item-label pair present in the sequence, or $q=N$.
\end{assumption}

Note that even if the target label’s position is fixed, a full induction head is still required: the model cannot directly attend to specific positions because positional embeddings are randomly generated and carry no explicit location information.

\begin{definition} \defn{Parameter Emergence Time.}
    We say that each of the parameters $\alpha$, $\beta$, or $\gamma$ has emerged when its value becomes greater than $1/2$ for the first time:
    \begin{gather}
        T_{\alpha} = \inf\Bigl\{t\Bigm|
        \alpha_3(t)\ge\tfrac12 \Bigr\} , \\
        T_{\beta} = \inf\Bigl\{t \Bigm|
        \beta_2(t)\ge\tfrac12 \Bigr\}  , \\
        T_{\gamma} = \inf\Bigl\{t \Bigm|
        \gamma_3(t)\ge\tfrac12\Bigr\} ,
    \end{gather}
where $t \in\mathbb{R}_{\ge0}$.
\end{definition}

\begin{thm}
\label{thm:tight-bound}
    Assume that inputs are orthonormal and that only parameters $\alpha_3, \beta_2,$ and $\gamma_3$ are learnable. In this case, for large enough $N$, parameters always emerge in the order $T_\gamma < T_\beta <T_\alpha$ and the emergence times asymptotically follow:
    \begin{align*}
        T_\alpha = \Theta\Big(N^2 \Big) , \;\;
        T_\beta = \Theta\Big(N^2 \Big) , \;\;
        T_\gamma = \Theta\big(N\big) ,
    \end{align*}
    with $N$ item-label pairs in the context.
\end{thm}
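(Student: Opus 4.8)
The plan is to reduce the training objective to an explicit scalar function of $\alpha,\beta,\gamma$ and $N$, and then analyze the resulting three-dimensional gradient flow phase by phase, using the fact that all three trajectories start at $0$.

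\textbf{Step 1: closing the loss.} Under the orthonormality and query-last assumptions every softmax input collapses to a handful of inner products. In the first layer the merged key--query matrix acts as $\alpha\mM$ on the positional block, so $\vp_i^\intercal\mM\vp_j=0$ and $\mM^2=\mI$ force the only nonzero first-layer logit to be the one paid by label $i$ to item $i$, equal to $\alpha$; every logit involving the query (whose positional block is $\zero$) vanishes. Hence label $i$ retrieves $\va_i$ with weight $\mu_i:=e^{\alpha}/(e^{\alpha}+2i-1)$ and attends uniformly otherwise, while items and the query attend uniformly. Propagating this through the second layer --- whose key--query matrix only couples the content block of the query to the first-layer output block --- the query's second-layer logits are $0$ everywhere except on three positions: item $N$, label $N$, and the query itself, with values $\tfrac{\beta}{2N-1}$, $\beta\mu_N$, and $\tfrac{2\beta}{2N+1}$. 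Passing the retrieved content through $\gamma\mI$ and using the orthonormality of $\{\va_i\}\cup\{\vb_i\}$, the mean-squared error becomes
\begin{equation*}
\mathcal{L}(\alpha,\beta,\gamma)\;=\;1\;-\;2\gamma\,\frac{e^{\beta\mu_N}}{Z}\;+\;\gamma^{2}\,Q(\alpha,\beta),\qquad Z\;=\;2(N-1)+e^{\beta\mu_N}+e^{\beta/(2N-1)}+e^{2\beta/(2N+1)},
\end{equation*}
with $Q(\alpha,\beta)=\big(2(N-1)+(e^{\beta/(2N-1)}+e^{2\beta/(2N+1)})^{2}+e^{2\beta\mu_N}\big)/Z^{2}=\Theta(1/N)$. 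This closed form is what I differentiate; note that $\mathcal{L}$ depends on $\alpha$ only through $\mu_N$, that the whole $(\alpha,\beta)$-dependence sits inside terms carrying a factor $\gamma$, and that the $\alpha$-dependence carries, in addition, a factor $\beta$ because $\partial_\alpha e^{\beta\mu_N}=\beta\mu_N(1-\mu_N)e^{\beta\mu_N}$.

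\textbf{Step 2: cascade and ordering.} Differentiating gives $\dot\gamma=2w-2\gamma Q$ with $w:=e^{\beta\mu_N}/Z$, $\dot\beta\propto\gamma$, and $\dot\alpha\propto\gamma\beta$; in particular $\dot\gamma(0)=2/(2N+1)>0$ while $\dot\alpha(0)=\dot\beta(0)=0$, so $\gamma$ must move first. I will show that on $[0,T_\gamma]$ the other parameters stay $o(1)$, so $w,Q$ are essentially frozen at their $t=0$ values $w=\tfrac1{2N+1}$, $Q=\tfrac{2N+3}{(2N+1)^2}$, and a trapping/comparison argument for the logistic-type equation $\dot\gamma=2w-2\gamma Q$ shows $\gamma$ rising monotonically toward its frozen equilibrium $w/Q=\tfrac{2N+1}{2N+3}>\tfrac12$ and reaching $\tfrac12$ in time $\Theta(N)$, giving $T_\gamma=\Theta(N)$. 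Since (Step 3) $\dot\beta=O(1/N^2)$ here, $\beta(T_\gamma)=O(1/N)<\tfrac12$, so $T_\gamma<T_\beta$, and likewise $T_\gamma<T_\alpha$. For $T_\beta<T_\alpha$ I compare the two equations directly: to leading order $\dot\alpha/\dot\beta=\beta(1-\mu_N)(1+O(1/N))$, i.e. $d\alpha/d\beta\approx\beta$, so from zero initialization $\alpha\lesssim\tfrac12\beta^{2}$ throughout; hence $\alpha<\tfrac12$ whenever $\beta<1$, so $\beta$ crosses $\tfrac12$ strictly before $\alpha$.

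\textbf{Step 3: the two $\Theta(N^{2})$ rates.} The crucial simplification is that for every $t\le T_\alpha$ we have $\alpha(t)\le\tfrac12$ by definition, hence $e^{\alpha(t)}\in[1,e^{1/2}]$ and therefore $\mu_N(t)=\Theta(1/N)$ automatically, with no circular reasoning. After $T_\gamma$ a one-sided trap keeps $\gamma=\Theta(1)$ (its equilibrium $w/Q$ drifts but stays $\Theta(1)$, in particular $\ge\tfrac12$). Then $\partial_\beta w=w(\mu_N-\partial_\beta\log Z)=w\mu_N(1-O(1/N))=\Theta(1/N^{2})$ while $\gamma^{2}\partial_\beta Q=O(1/N^{3})$ is lower order, so $\dot\beta=\Theta(1/N^{2})$ uniformly on $[0,T_\beta]$ and integrating from $0$ to $\tfrac12$ yields $T_\beta=\Theta(N^{2})$; this also forces $\beta=\Theta(1)$ on $[T_\beta,T_\alpha]$. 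For $\alpha$: $\partial_\alpha w=\beta\,w(1-w)\,\mu_N(1-\mu_N)=\Theta(\beta/N^{2})$ and the $Q$-term is again lower order, so $\dot\alpha=O(1/N^{2})$ on all of $[0,T_\alpha]$ (hence $T_\alpha=\Omega(N^{2})$) and $\dot\alpha=\Theta(1/N^{2})$ once $\beta=\Theta(1)$ (from time $T_\beta$ on), hence $T_\alpha\le T_\beta+\Theta(N^{2})=\Theta(N^{2})$. Combining, $T_\gamma=\Theta(N)$, $T_\beta=T_\alpha=\Theta(N^{2})$, with $T_\gamma<T_\beta<T_\alpha$.

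\textbf{Main obstacle.} Step 1 is routine bookkeeping. The real work is the interval-by-interval control in Steps 2--3: validating the frozen-coefficient approximations of the $\gamma$- and $\beta$-ODEs over their full $\Theta(N)$, resp. $\Theta(N^{2})$, durations; showing $\gamma$ neither collapses toward $0$ nor overshoots; and checking that every $\gamma^{2}\partial Q$ correction is an $O(1/N)$ factor below the leading term. The single most delicate point is the feedback $\alpha\mapsto\mu_N\mapsto\dot\beta$: since $\partial_\beta w$ scales like $\mu_N$, if $\alpha$ reached $\Theta(1)$ before $\beta$ emerged then $\mu_N$ would become $\Theta(1)$, $\dot\beta$ would jump to $\Theta(1/N)$, and $T_\beta$ would collapse to $\Theta(N)$; the whole argument therefore hinges on the bound $\alpha\lesssim\tfrac12\beta^{2}$ (equivalently $\alpha\le\tfrac12$ before $T_\alpha$), which is exactly what pins $\mu_N=\Theta(1/N)$ and lands both $T_\beta$ and $T_\alpha$ at $\Theta(N^{2})$.
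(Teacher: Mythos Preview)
Your plan is correct and follows the same three–phase gradient–flow analysis as the paper: close the loss to a scalar function of $(\alpha,\beta,\gamma,N)$, show $\gamma$ emerges first in $\Theta(N)$ while $\alpha,\beta$ stay $O(1/N)$, then $\beta$ in $\Theta(N^2)$ with $\gamma$ trapped at $\Theta(1)$, then $\alpha$ in an additional $O(N^2)$ with $\beta$ trapped at $\Theta(1)$.

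Two differences worth noting. First, the paper silently modifies the causal mask in the appendix to forbid self-attention, which collapses the query's second-layer logits to a \emph{single} nonzero entry (on label $N$) and yields the very clean loss $\mathcal L=\gamma^2(s^2+2N-1)/(s+2N-1)^2-2\gamma s/(s+2N-1)+1$ with $s=e^{\beta\mu_N}$. You instead keep the standard mask and correctly pick up the extra logits on item $N$ and on the query itself; your closed form is therefore more faithful to the architecture stated in the main text, at the cost of carrying the $e^{\beta/(2N-1)}$ and $e^{2\beta/(2N+1)}$ terms through the estimates. Since both extra exponents are $\Theta(1/N)$ whenever $\beta=O(1)$, they only perturb $w,Q,\partial w,\partial Q$ by lower-order factors and the rates survive. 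Second, for the ordering $T_\beta<T_\alpha$ the paper exploits the exact factorization $\dot\alpha=\bigl[\beta(2N-2)/(e^\alpha+2N-2)\bigr]\dot\beta$, whose bracket is $<1$ for $\beta<1$, whereas you integrate $d\alpha/d\beta\approx\beta$ to obtain $\alpha\lesssim\tfrac12\beta^2$; both arguments land at the same conclusion, yours giving a slightly sharper $\alpha(T_\beta)\lesssim 1/8$.

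The obstacle you flag (the $\alpha\!\to\!\mu_N\!\to\!\dot\beta$ feedback) is exactly the right one, and your resolution via $\alpha\le 1/2$ on $[0,T_\alpha]$ pinning $\mu_N=\Theta(1/N)$ is the same device the paper uses. The one place to be a bit more explicit is the ``$\beta=\Theta(1)$ on $[T_\beta,T_\alpha]$'' claim: the paper closes this with a short bootstrap (lower-bound $\dot\alpha\ge c/N^2$ once $\beta\ge 1/2$, hence $T_\alpha-T_\beta\le N^2/c$, hence $\beta$ grows by at most $O(1)$ in that window), which you should spell out to avoid apparent circularity.
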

\begin{proof}[Proof Sketch]
    The proof is based on proving bounds for the gradient of each parameter.
    Before the emergence of any parameter, we have that $\partial \gamma_3 / \partial t = \Theta(1/N)$, while $\partial \alpha_3 / \partial t = O(1/N^2)$ and $\partial \beta_2 / \partial t = O(1/N^2)$. This implies that $\gamma_3$ emerges first in $\Theta(N)$. Afterwards, we show that $\partial \beta_2 / \partial t = \Theta(1/N^2)$ and $\partial \beta_2 / \partial t > \partial \alpha_3 /\partial t $. This implies that $\beta_2$ emerges next in $\Theta(N^2)$. Finally, we show that $\partial \alpha_3 / \partial t = o(1/N^2)$, which implies that $\alpha_3$ emerges last in $\Theta(N^2)$. See the full proof in \cref{app:tight_bound}.
\end{proof}

\begin{definition}
    \defn{Induction Head.} We say that an induction head has emerged if all three parameters are greater than $1/2$.
\end{definition}

\begin{definition}
    \defn{Time until ICL.} We say that in-context learning has emerged at the first time when the induction head is present. Specifically,
    \begin{gather*}
        t_{\mathrm{ICL}}
        =\inf\Bigl\{\,t \Bigm|         \min\big(\alpha_3(t),\;\beta_2(t),\;\gamma_3(t)\big)\ge\tfrac12\,
        \Bigr\} .
\end{gather*}

\end{definition}

\begin{corollary}
    The time until the emergence of in-context learning asymptotically follows:
    \begin{align*}
        t_{\text{ICL}} = \Theta\Big(N^2 \Big) ,
    \end{align*}
    where $N$ is the number of item-label pairs in the context.
\end{corollary}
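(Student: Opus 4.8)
The plan is to obtain the corollary directly from the theorem, together with a monotonicity (or ``no backsliding'') property of the three trajectories. First I would record that, under gradient flow from zero initialization, each of $\alpha$, $\beta$, $\gamma$ is non-decreasing in $t$. The proof of the theorem (\cref{app:tight_bound}) already establishes that the velocities $-\partial\mathcal{L}/\partial\alpha$, $-\partial\mathcal{L}/\partial\beta$, $-\partial\mathcal{L}/\partial\gamma$ are strictly positive in the pre-emergence regime; I would extend that sign analysis so that no velocity ever becomes negative, hence once a parameter exceeds $1/2$ it stays above $1/2$ for all later times. Given this, $t_{\mathrm{ICL}}=\max\{T_\alpha,T_\beta,T_\gamma\}$.

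Next I would invoke the ordering $T_\gamma<T_\beta<T_\alpha$ from the theorem, which immediately collapses the maximum to $t_{\mathrm{ICL}}=T_\alpha$, and then the asymptotic estimate $T_\alpha=\Theta(N^2)$ from the theorem gives $t_{\mathrm{ICL}}=\Theta(N^2)$. It is worth isolating the two directions: the lower bound $t_{\mathrm{ICL}}\ge T_\alpha$ is immediate and uses no monotonicity at all, since at any time when the induction head is present we in particular have $\alpha\ge 1/2$, so $t_{\mathrm{ICL}}\ge\inf\{t\mid\alpha(t)\ge 1/2\}=T_\alpha$; the monotonicity is needed only for the matching upper bound $t_{\mathrm{ICL}}\le T_\alpha$, i.e.\ to guarantee that $\beta$ and $\gamma$ are still above $1/2$ at time $T_\alpha$ given that they crossed $1/2$ earlier.

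The main obstacle I anticipate is exactly this ``stays above $1/2$'' claim for $\beta$ and $\gamma$ \emph{after} their own emergence times, because the gradient-sign arguments in the theorem are naturally stated for the regime before all three parameters have emerged, whereas here I need to control the velocities of $\beta$ and $\gamma$ in the window where they already exceed $1/2$ but $\alpha$ is still small. I expect this to follow from the same closed-form expressions for $\partial\mathcal{L}/\partial\beta$ and $\partial\mathcal{L}/\partial\gamma$ used in \cref{app:tight_bound} --- each should remain a nonnegative combination of the (nonnegative) parameters under orthonormal inputs and \cref{asspt:query_last} --- but it is the one computation I would actually carry out rather than wave at. As a fallback, if global monotonicity were awkward, it suffices to prove the weaker invariant that each parameter remains in $[1/2,1]$ once it has emerged, which is all the argument uses; and in the worst case one can always retreat to the unconditional bound $t_{\mathrm{ICL}}\ge T_\alpha=\Theta(N^2)$ plus an explicit $O(N^2)$ construction of a time at which all three parameters simultaneously exceed $1/2$.
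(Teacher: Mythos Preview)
Your reduction $t_{\mathrm{ICL}}=\max\{T_\alpha,T_\beta,T_\gamma\}=T_\alpha=\Theta(N^2)$ is exactly how the paper derives the corollary from the theorem, so the overall strategy is correct and matches the paper. One caution on your primary route: global monotonicity of $\gamma$ is \emph{not} a pointwise sign fact you can read off the gradient formula --- indeed the paper itself uses that $\partial\gamma/\partial t<0$ at $\gamma=3/2$ (with $\alpha,\beta$ bounded) to trap $\gamma$ from above, so the velocity is not sign-definite in parameter space. Proving monotonicity \emph{along the trajectory} would require a bootstrap (monotone $\alpha,\beta\Rightarrow$ monotone $s\Rightarrow$ monotone equilibrium $\gamma^*\Rightarrow\gamma<\gamma^*$), which is more work than needed. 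Your fallback --- show each parameter stays in a fixed interval above $1/2$ once it has emerged --- is precisely what the paper does in \cref{app:tight_bound}: it maintains $\gamma\in[1/2,3/2]$ throughout regimes 2 and 3 and $\beta\in[1/2,20]$ throughout regime 3, which is all that is required to conclude that at time $T_\alpha$ all three parameters are simultaneously $\ge 1/2$. So skip the monotonicity detour and invoke those invariants directly.
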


\paragraph{Empirical Validation.}
We validate our theoretical results in \cref{fig:emergence}. Unlike our theory, we use a threshold of $0.1$ for $\valpha_3$ and $\vbeta_2,$ (rather than $0.5$) to better highlight their separation. Training details in \cref{app:training_details_ihd}.

\begin{figure}[h]
  \centering
  \captionsetup{width=.9\linewidth}
  \includegraphics[width=0.9\linewidth]{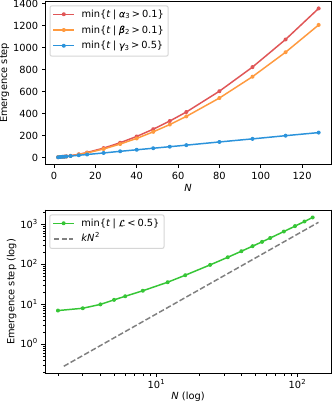}
  \caption{\emph{Top:} The time until the emergence of $\valpha_3$, $\vbeta_2,$ and $\vgamma_3$ for different values of $N$. \emph{Bottom:} Time until the emergence of in-context learning (log scale) and its quadratic asymptote.}
  \label{fig:emergence}
\end{figure}

\section{Induction Heads in Standard Transformers}
\label{sec:standard-transformers}

\subsection{Setup}

We validate the insights from the previous section and use them to understand standard transformers. We train an autoregressive transformer following the recipe of \citet{vaswani2017attention}. We train the model using synthetic data to predict the label of a query item based on the preceding item-label pairs, as depicted in \cref{fig:induction-head}. We use only two attention-only layers with one attention head per layer. We remove MLPs from the transformer blocks since they are neither necessary nor useful for the task at hand. We apply a cross-entropy loss only on the query item. We specify the full training details in \cref{app:transformer_training_details}.

\subsection{Weight Matrix Structure}

\paragraph{Notation.} We train a transformer with residual stream dimension $D$, vocabulary size $N_T$, and block size $N_P$. We denote the embedding of token $i$ as $\vt_i \in \R^D$ and the embedding of position $j$ as $\vp_j \in \R^D$. We deonte the token and positional embedding matrices as $\mT \in \R^{D \times N_T}$ and $\mP \in \R^{D \times N_P}$, respectively. The layer $l \in \{ 1,  2\}$ has the query, key, value, and output matrices $\mW_Q^l, \mW_K^l, \mW_V^l \in \R ^{D_H\times D}$, and $\mW_O^l \in \R^{D \times D_H}$, respectively. A final linear output layer $\mW_o \in \R^{N_T \times D}$ is applied. We use $N_T = N_P = 32$.

\paragraph{Architecture.} For an input sequence of length $L$ with tokens $\mX_T \in \R^{N_T \times L}$ and positions $\mX_P \in \R^{N_P \times L}$ encoded as one-hot vectors, the model will compute residual streams $\mH_l \in \R^{D \times L}$ as
\begin{align}
    &\mH_0 = \mT \mX_T + \mP \mX_P
    \\
    &\mH_{l} = \mH_{l-1} + \mW_O^{l} \mW_V^{l} \mH_{l-1} \, \sigma(\mS_{l})
    \\
    &\mS_l = (\mW_K^l \mH_{l-1})^\intercal (\mW_Q^l \mH_{l-1})
\end{align}
where $l \in \{1,2\}$ and  $\sigma$ denotes column-wise softmax with causal masking. $\mS_l \in \R^{L \times L}$ is the matrix of attention scores where $(\mS_l)_{i,j}$ denotes the attention score paid by position $i$ to position $j$. The output is computed using a linear layer and softmax $\mY=\sigma(\mW_o \mH_2) \in \R^{N_T \times L}$.

\paragraph{Explanation.} There are only 4 sub-spaces of the residual stream that are ever activated. First, there is the space spanned by the initial token and positional embeddings, $\vt_i$ and $\vp_j$. Second, there is the space where the first head writes the retrieved embeddings, $\mW^1_O \, \mW^1_V \, \vt_i$ and $\mW^1_O \, \mW^1_V \, \vp_j$. Third, there is the space where the second head writes the retrieved embeddings, $\mW^2_O \, \mW^2_V \, \vt_i$ and $\mW^2_O \, \mW^2_V \, \vp_j$. Finally, the second head could retrieve the output of the first head, creating a fourth subspace spanned by $\mW^2_O \, \mW^2_V \, \mW^1_O \, \mW^1_V \, \vt_i$ and $\mW^2_O \, \mW^2_V \, \mW^1_O \, \mW^1_V \, \vp_j$. Since there are $N_T$ tokens and $N_P$ positions, each of the four subspaces will have $N_T + N_P$ dimensions. Moreover, each subspace is highly interpretable, as it can be indexed directly by the corresponding token or position.

\paragraph{Visualization.} Using these intepretable directions, we can understand the mechanism performed by each layer. For example, $(\mW_K^1 \, \vp_i)^\intercal \, \mW_Q^1 \, \vp_j$ represents exactly the attention score paid by position $i$ to position $j$ during the first layer. In \cref{fig:transformer_interp}, we visualize the key-query matrix products and final output matrix, indexed by these highly interpretable dimensions. Note that this transformation shows how to compute attention scores and outputs for any possible input sequence, and hence \cref{fig:transformer_interp} is a complete description of the model behavior.

\subsection{Induction Head Mechanism}

In \cref{fig:transformer_interp}, we can see that our weights have a relatively simple and interpretable structure that mirrors the minimal model. Each layer is dominated by a diagonal or subdiagonal within a single block that perfectly aligns the corresponding subspaces. The first layer attends to the previous position. The second layer attends to the token retrieved by the first layer. The final layer outputs the token retrieved by the second layer. This clarifies the structure of the weight matrices that underlie the induction head mechanism.

\begin{figure}[h]
  \centering
  \includegraphics[width=\linewidth]{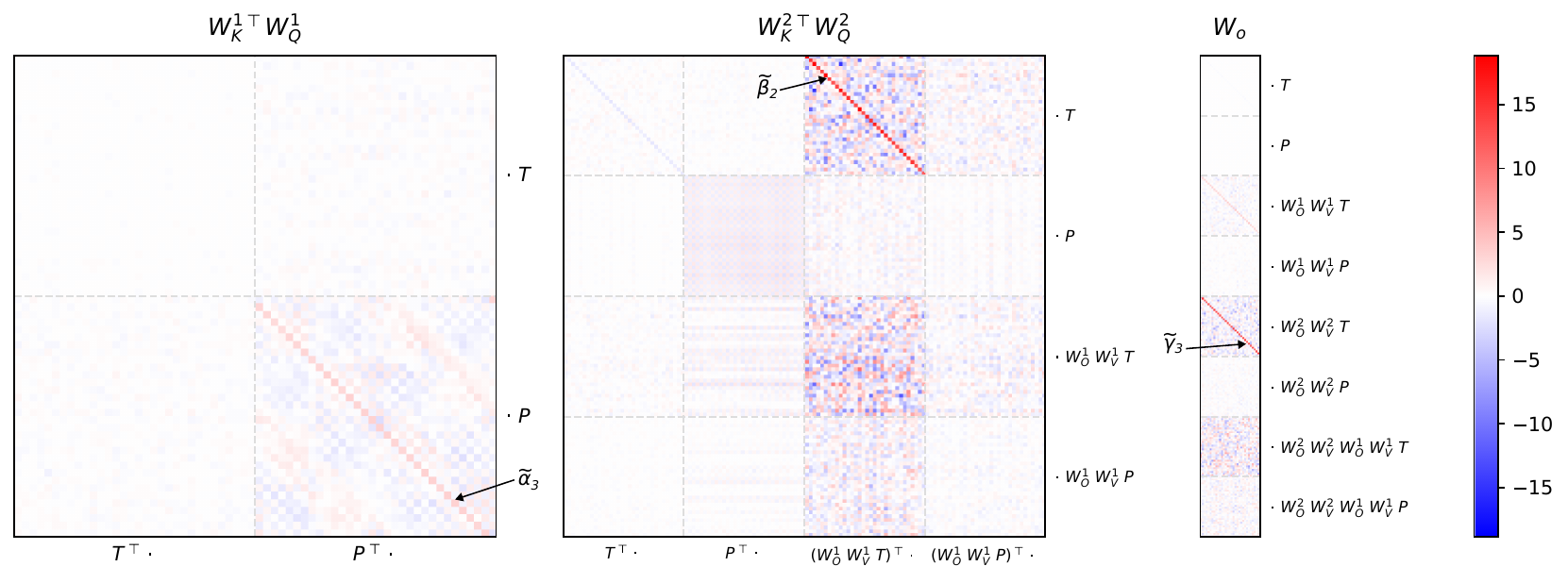}
  \caption{The weights of a two-layer attention-only transformer under our interpretable transformation. Dots $\cdot$ denote matrix multiplication. For example, the bottom-right block of the left plot, $\mP^\intercal \, {\mW_K^1}^\intercal \mW_Q^1 \mP$, is dominated by the subdiagonal, showing that each position attends to the previous. Some noise is due to random initialization and stochastic gradient descent.}
  \label{fig:transformer_interp}
\end{figure}

\subsection{Training Dynamics via Progress Measures}
\label{sec:progress-measures}

To better understand the emergence of induction heads in standard transformers, we devise three progress measures corresponding exactly to the three relevant structures identified in \cref{fig:transformer_interp}:
\begin{align}
    &\widetilde{\alpha}_3 = \sum_{i=1}^{N_P}  (\mW_K^1 \; \vp_{i-1})^\intercal (\mW_Q^1 \; \vp_{i})
    \\
    &\widetilde{\beta}_2 = \sum_{i=1}^{N_P}  (\mW_K^2 \; \mW_O^1 \; \mW_V^1 \; \vt_{i})^\intercal (\mW_Q^2 \; \vt_{i})
    \\
    &\widetilde{\gamma}_3 = \mathrm{tr}(\mW_o \; \mW_O^2 \; \mW_V^2 \; \mT )
\end{align}

We train a larger version of the previous transformer and plot the evolution of the progress measures in \cref{fig:progress_measures}. We observe the same order of emergence as in the minimal formulation: first $\widetilde{\gamma}_3$, then $\widetilde{\beta}_2$, and finally  $\widetilde{\alpha}_3$. However, the use of the cross-entropy loss introduces an important difference: achieving low loss requires much larger parameter magnitudes, particularly $\widetilde{\gamma}_3$ and $\widetilde{\beta}_2$. This results in different post-emergence dynamics and delayed loss reduction.

\begin{figure}[H]
  \centering
  \includegraphics[width=\linewidth]{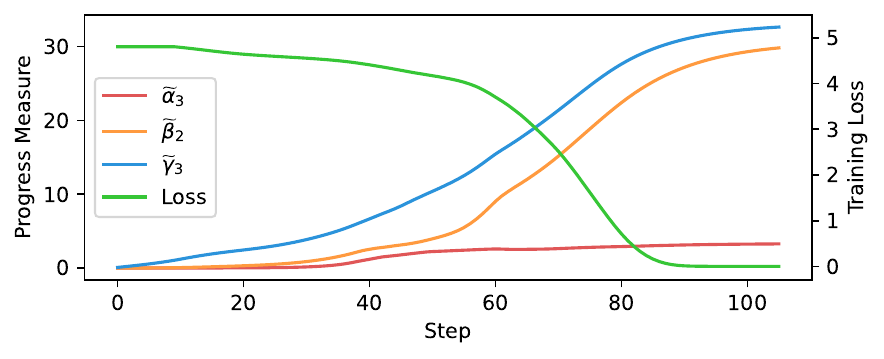}
  \caption{The emergence of induction heads in a standard attention-only transformer, illustrated using progress measures, resembles the disentangled transformer.}
  \label{fig:progress_measures}
\end{figure}

\section{Discussion}

\subsection{How do $\alpha$, $\beta$, and $\gamma$ emerge during training?}

\paragraph{The emergence of $\gamma$.} Even if $\alpha$ and $\beta$ are completely untrained, the attention layers still return something: the average of all items and labels in the context. This average achieves a better loss than predicting zero because it also contains the correct label, and this is exactly what the model learns to predict initially. However, this solution becomes worse when $N$ is increased. In fact, the gradient towards this solution is inversely proportional to $N$, hence why $\gamma$ emerges in $\Theta(N)$.

\paragraph{The emergence of $\beta$.} After the final layer is in place, there is now a gradient for the second layer to attend correctly. Because each label follows immediately after its item, the first layer will always retrieve the item to some extent, even when completely untrained. Taking the causal masking into account, each item will be retrieved the most by its label. This enables the second layer to learn to retrieve based on the query item. However, since the first layer returns a very weak signal (inversely proportional to $N$), the gradient of $\beta$ will be inversely proportional to $N^2$.

\paragraph{The emergence of $\alpha$.} Finally, after $\beta$ and $\gamma$ have emerged, there is a very strong gradient for the first layer to attend correctly. This quickly drives the emergence of $\alpha$.

\subsection{The Importance of Context Length}

We have established that a longer context length slows down the emergence of induction heads. This fact has interesting implications that are worth exploring in future work.

\citet{chan2022data} have empirically established that the emergence of in-context learning is modulated by data distributional properties specific to natural language, such as burstiness (items appear in clusters rather than being uniformly distributed
over time). Our work paves the way for a theoretical understanding of this connection. For example, bustiness could be understood as a modulator of the \textit{effective} context length by reducing the distance between items from the same class. We hypothesize that similar gains could be achieved by other means of reducing the \textit{effective} context length, such as special positional embeddings \citep{roformer}.

\section{Related Work}
\paragraph{In-Context Learning}
\citet{brown2020language} first observed that LLMs are capable of in-context learning. \citet{chan2022data} empirically showed that the ICL--IWL trade-off is modulated by data distributional properties specific to natural language. 

\paragraph{Induction Heads}
 Later, \citet{olsson2022context} attributed this ability to a two-layer \citep{sanford2024one} mechanism (termed \textit{induction head}) that emerges abruptly during training. Crucial to our work, \citet{reddy2023mechanistic} proposed a 3-parameter \textit{phenomenological} model of an induction head by directly parameterizing the attention scores. The parameters of this model (denoted as $\beta_1$, $\alpha$, and $\xi$) correspond exactly to our three pseudo-parameters ($\alpha_3$, $\beta_2$, and $\gamma_3$). 
 Several works
 \citep{nichani2024transformers, edelman2024evolution, chen2024unveiling, wang2024transformers} use a staged training process to show how transformers acquire the ability to learn in-context bigrams. \citet{bietti2024birth} study the transformer training dynamics from the perspective of \textit{associative memories}, showing how an induction head can emerge after three steps of gradient descent. 


\paragraph{Mechanistic Interpretability}
Several works seeks to attribute particular behaviors of neural networks to specific patterns in their weights and activations \citep{olah2020zoom, elhage2021mathematical, doshi2017towards, olah2017feature, bereska2024mechanistic, cammarata2020thread}. \citet{friedman2023learning} introduce the \textit{disentangled transformer} architecture, which is interpretable by design. Other works focus on multi-step reasoning \citep{wang2024buffer, musat2025mechanism, cabannes2024iteration}, context-free grammars \citep{allen2023physics}, and modular addition \citep{nanda2023progress, zhong2023clock, gromov2023grokking, he2024learning}.

\section{Conclusion}

In this paper, we have shown how induction heads emerge in an ICL task. Our work paves the way for a better theoretical understanding of transformer learning dynamics. We believe that a similar approach could illuminate other important phenomena in deep learning, such as the \textit{in-context vs. in-weights learning} trade-off, abrupt learning, or the emergence of other transformer circuits.

\section{Limitations}

While our work provides a formal proof for the emergence of induction heads, it relies on several simplifying assumptions.

First, our architectural scope is limited to a disentangled, attention-only transformer. While we empirically show similarities to standard transformers in \cref{sec:standard-transformers}, the presence of Layer Normalization, MLPs, and additive residual streams in production-scale models may introduce complex training dynamics not captured by our framework.

Second, our theoretical results assume orthonormal inputs and zero initialization. In practice, token embeddings are not perfectly orthogonal and evolve during training. This may alter the emergence time $t_{\text{ICL}}$ or the specific sequence of parameter growth.

Finally, our study focuses on a simplified ICL task (exact matching of item-label pairs). Natural language involves varying query positions, semantic nuances, contextual cues, and hierarchical structures. Understanding how the quadratic bound on emergence time translates to data distributions more similar to natural language remains an open question for future work.


\bibliography{custom}

\newpage
\appendix
\onecolumn

\section{Weights Structure Full Proof for \cref{thm:19-parameters}}
\label{app:weight_structure}

\subsection{Summary}

Our strategy is to show that if $W^{(1)}, W^{(2)}$, and $W^{(3)}$ have this structure, then their gradients also have the same structure. Since we start from zero initialization, by induction, this means that the structure is preserved throughout the entire training process.

To prove the structure of the gradient, we apply a carefully chosen rotation to the entire data distribution. Since the data distribution is isotropic, the rotation will not change the data distribution, so the expected gradient will also remain unchanged.

However, we are also able to show that our rotation induces a specific similarity transformation of the gradient:
\begin{equation*}
    \mathbb{E}\Biggl[ \frac{\partial \mathcal{L}}{\partial \W{k}[ij]} \Bigg]
    \;=\;
    F \; \mathbb{E}\Biggl[ \frac{\partial \mathcal{L}}{\partial \W{k}[ij]} \Bigg] \; F^\top
\end{equation*}
where $F$ is an orthogonal or block-orthogonal matrix and $\W{k}[ij]$ is a block of a weight matrix. From this we are able to show that the expected gradient must have the desired structure.  

\subsection{Prerequisites}

\subsubsection{Orthogonal Transformations}

\begin{definition} \label{def:orthogonal}
    \defn{Orthogonal Matrix.}
    We say that a matrix $E \in \R^{k \times k}$ is orthogonal if it satisfies $EE^\top = E^\top E = I$.
\end{definition}
\begin{proposition} \label{prop:orthogonal}
    Let $A \in \R^{k \times k}$ be some matrix. If $EAE^\top = A$ holds \textbf{for all} orthogonal matrices $E \in \R^{k \times k}$, then it follows that $A=\alpha \, I$ for some $\alpha \in \R$.
\end{proposition}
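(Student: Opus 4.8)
The plan is to test the identity $EAE^\top = A$ against two especially convenient families of orthogonal matrices — permutation matrices and diagonal sign-flip matrices — which together suffice to pin down every entry of $A$. The case $k=1$ is immediate (a $1\times 1$ matrix is automatically of the form $\alpha I$), so I assume $k\ge 2$ throughout.

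First I would restrict attention to permutation matrices. Each permutation matrix $P$ is orthogonal (\Cref{def:orthogonal}), and $PAP^\top$ is precisely the matrix obtained from $A$ by applying one and the same permutation simultaneously to its rows and to its columns. Since $PAP^\top = A$ must hold for every permutation matrix $P$, it follows that any two diagonal entries of $A$ are equal — call the common value $\alpha$ — and any two off-diagonal entries of $A$ are equal — call the common value $\mu$ — because for $k\ge 2$ some permutation carries any chosen ordered index pair to any other one of the same diagonal-versus-off-diagonal type.

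Next I would eliminate $\mu$ using a single sign flip. The diagonal matrix $E=\mathrm{diag}(-1,1,\ldots,1)$ satisfies $EE^\top = E^\top E = I$, so it is orthogonal, and a direct computation gives $(EAE^\top)_{12} = -A_{12} = -\mu$. The hypothesis $EAE^\top = A$ then forces $\mu = -\mu$, i.e. $\mu = 0$. Hence all off-diagonal entries of $A$ vanish and all diagonal entries equal $\alpha$, which is exactly to say $A = \alpha I$, as claimed.

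I do not expect any genuine obstacle: the argument is elementary and short. The only points requiring a little care are treating $k=1$ separately (where an off-diagonal entry is vacuous) and phrasing the permutation step in terms of ordered index pairs so that $A_{ij}$ and $A_{ji}$ are both pinned down. For completeness I note an alternative, more structural route: $A$ commutes with every orthogonal matrix, hence with $\exp(tK)$ for every skew-symmetric $K$ and therefore with every skew-symmetric matrix; since these together with $I$ span enough of $\R^{k\times k}$, one concludes that $A$ is a scalar multiple of the identity. The permutation-plus-sign-flip argument is, however, the most self-contained and is the one I would write up.
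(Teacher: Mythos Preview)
Your proof is correct and uses essentially the same ingredients as the paper's proof --- permutation matrices and diagonal sign-flip matrices --- only applied in the reverse order: the paper first uses sign-flips at each coordinate to kill the off-diagonal entries and then transpositions to equalize the diagonal, whereas you first use permutations to make all off-diagonal entries equal and then a single sign-flip to annihilate them. Both arguments are equally elementary; yours is marginally more economical in that one sign-flip suffices once the off-diagonal entries are known to coincide.
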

\begin{proof}
\emph{Step 1. All off‐diagonal entries of \(A\) vanish.}

Fix an index \(j\in\{1,\dots,k\}\) and let
\[
E \;=\;\mathrm{diag}(1,\dots,1,-1,1,\dots,1)
\]
be the diagonal orthogonal matrix with entry \(-1\) in the \(j\)th position and \(+1\) elsewhere.  Then
\[
(EAE^\top)_{i\ell}
\;=\;
E_{ii}\,A_{i\ell}\,E_{\ell\ell}
\;=\;
\begin{cases}
A_{i\ell},&i,\ell\neq j,\\
-\,A_{i\ell},&\text{exactly one of }i,\ell=j,\\
A_{jj},&i=\ell=j.
\end{cases}
\]
Since \(EAE^\top=A\), it follows that \(-A_{ij}=A_{ij}\) for every \(i\neq j\), whence \(A_{ij}=0\).  Varying \(j\) shows all off‐diagonal entries vanish, so
\[
A=\operatorname{diag}(a_{11},a_{22},\dots,a_{kk}).
\]

\medskip

\emph{Step 2. All diagonal entries of \(A\) coincide.}

Let \(E\) be any permutation matrix which swaps two coordinates \(i\) and \(j\).  Then \(E\) is orthogonal and
\[
EAE^\top
=\operatorname{diag}(\dots,a_{jj},\dots,a_{ii},\dots),
\]
interchanging the \(i\)th and \(j\)th diagonal entries of \(A\).  By invariance \(EAE^\top=A\), so \(a_{ii}=a_{jj}\).  Since \(i,j\) were arbitrary, there exists \(\alpha\in\mathbb{R}\) such that
\[
a_{11}=a_{22}=\cdots=a_{kk}=\alpha,
\]
and hence \(A=\alpha I\).

\end{proof}

\subsubsection{Block-Orthogonal Transformations}

\begin{definition} \label{def:block_orthogonal}
    \defn{Block-Orthogonal Matrix.}
    We say that a matrix $F \in \R^{2k \times 2k}$ is block-orthogonal if it has either of the following two forms:
    \begin{equation*}
        F = 
            \begin{bNiceArray}{c|c}[margin]
              {E} & \mathbf{0} \\
              \hline
              \mathbf{0} & {E}
            \end{bNiceArray}
        \qquad
        \text{or}
        \qquad
        F = 
            \begin{bNiceArray}{c|c}[margin]
              \mathbf{0} & {E} \\
              \hline
              {E} & \mathbf{0}
            \end{bNiceArray}
    \end{equation*}
    where $E \in \R^{k \times k}$ is an orthogonal matrix.
\end{definition}

\begin{proposition} \label{prop:block_orthogonal}
    Let $A \in \R^{2k \times 2k}$ be some matrix. If $FAF^\top = A$ holds \textbf{for all} block-orthogonal matrices $F \in \R^{2k \times 2k}$, then it follows that
    \begin{equation*}
        A = 
            \begin{bNiceArray}{c|c}[margin]
              {\alpha I} & {\beta I} \\
              \hline
              {\beta I} & {\alpha I}
            \end{bNiceArray}
    \end{equation*}
    for some $\alpha, \,\beta \,\in\, \R$.
\end{proposition}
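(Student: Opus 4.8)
The plan is to feed the two forms of block-orthogonal matrix into the invariance hypothesis one at a time. Write $A$ in $k\times k$ blocks as
\[
A \;=\; \begin{bNiceArray}{c|c}[margin]
  {A_{11}} & {A_{12}} \\
  \hline
  {A_{21}} & {A_{22}}
\end{bNiceArray},
\qquad A_{ij}\in\R^{k\times k}.
\]
First I would restrict to $F$ of the first form, $F = \begin{bNiceArray}{c|c}[margin] {E} & \mathbf{0} \\ \hline \mathbf{0} & {E} \end{bNiceArray}$ with $E\in\R^{k\times k}$ orthogonal. A direct block multiplication gives $FAF^\top = \begin{bNiceArray}{c|c}[margin] {E A_{11} E^\top} & {E A_{12} E^\top} \\ \hline {E A_{21} E^\top} & {E A_{22} E^\top} \end{bNiceArray}$, so the hypothesis $FAF^\top = A$ forces $E A_{ij} E^\top = A_{ij}$ for every orthogonal $E$ and every pair $(i,j)$. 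Applying \cref{prop:orthogonal} to each block yields scalars $c_{ij}\in\R$ with $A_{ij} = c_{ij} I$, i.e. $A = \begin{bNiceArray}{c|c}[margin] {c_{11} I} & {c_{12} I} \\ \hline {c_{21} I} & {c_{22} I} \end{bNiceArray}$.

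Next I would plug in a single matrix of the second form, the convenient choice being the block-swap $F = \begin{bNiceArray}{c|c}[margin] \mathbf{0} & {I} \\ \hline {I} & \mathbf{0} \end{bNiceArray}$, which is block-orthogonal with $E = I$ and satisfies $F^\top = F$. Conjugation by $F$ permutes the four blocks by swapping the two block indices, so $FAF^\top = \begin{bNiceArray}{c|c}[margin] {c_{22} I} & {c_{21} I} \\ \hline {c_{12} I} & {c_{11} I} \end{bNiceArray}$. Equating this with $A$ gives $c_{11} = c_{22}$ and $c_{12} = c_{21}$; setting $\alpha = c_{11}$ and $\beta = c_{12}$ produces exactly the claimed form, completing the proof.

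I do not expect a genuine obstacle here: both steps are routine block-matrix bookkeeping, and all the substantive content is already packaged in \cref{prop:orthogonal}. The only point worth flagging is that the full ``for all block-orthogonal $F$'' hypothesis is more than strictly needed — the first family with arbitrary orthogonal $E$ together with the single swap $F$ already pins $A$ down — so as a sanity check one should verify that the remaining block-orthogonal matrices impose no new constraint, which is immediate since the asserted form $\begin{bNiceArray}{c|c}[margin] {\alpha I} & {\beta I} \\ \hline {\beta I} & {\alpha I} \end{bNiceArray}$ is readily seen to commute appropriately under conjugation by every such $F$.
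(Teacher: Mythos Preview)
Your proposal is correct and follows essentially the same two-step route as the paper: first use the block-diagonal $F=\mathrm{diag}(E,E)$ together with \cref{prop:orthogonal} to force each $k\times k$ block to be scalar, then use the single block-swap $F$ to equate diagonally opposed scalars. Your closing sanity check is a nice addition but is not present (and not needed) in the paper's argument.
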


\begin{remark}
    Note that this condition is weaker than the condition stated in \cref{prop:orthogonal}, since not all orthogonal matrices are also block-orthogonal. Hence, the condition in \cref{prop:block_orthogonal} guarantees a structure that is less specific than \cref{prop:orthogonal}.
\end{remark}

\begin{proof}
We write
\[
A \;=\;
\begin{bNiceArray}{c|c}[margin]
A_{11} & A_{12} \\ \hline
A_{21} & A_{22}
\end{bNiceArray},
\]
where each block \(A_{ij}\in\R^{k\times k}\).  

\medskip
\noindent\emph{Step 1. All blocks are scalar matrices.}

For any othogonal matrix \(E\), we can set
\[
F \;=\;
\begin{bNiceArray}{c|c}[margin]
E & \mathbf{0} \\ \hline
\mathbf{0} & E
\end{bNiceArray}.
\]
Then 
\[
F\,A\,F^\top
=
\begin{bNiceArray}{c|c}[margin]
E A_{11}E^\top & E A_{12}E^\top \\ \hline
E A_{21}E^\top & E A_{22}E^\top
\end{bNiceArray}
\;=\;
\begin{bNiceArray}{c|c}[margin]
A_{11} & A_{12} \\ \hline
A_{21} & A_{22}
\end{bNiceArray}
\;=\;A,
\]
so \(E\,A_{ij}\,E^\top = A_{ij}\) for all \(i,j\).  By the previous proposition each block is a scalar multiple of the identity, $A_{ij} = \alpha_{ij}\,I_k$, for some $\alpha_{ij}\in\R$. Therefore,
\[
A =
\begin{bNiceArray}{c|c}[margin]
\alpha_{11}\,I & \alpha_{12}\,I \\ \hline
\alpha_{21}\,I & \alpha_{22}\,I
\end{bNiceArray}.
\]

\medskip
\noindent\emph{Step 2. Diagonally opposed blocks coincide.}
By setting
\[
F \;=\;
\begin{bNiceArray}{c|c}[margin]
\mathbf{0} & I \\ \hline
I & \mathbf{0}
\end{bNiceArray}
\]
we obtain
\[
F\,A\,F^\top
\;=\;
\begin{bNiceArray}{c|c}[margin]
A_{22} & A_{21} \\ \hline
A_{12} & A_{11}
\end{bNiceArray}
\]
which yields $\alpha_{11} = \alpha_{22}$, $\alpha_{12} = \alpha_{21}$. By writing
\(\alpha=\alpha_{11}\) and \(\beta=\alpha_{12}\), we obtain
\[
A =
\begin{bNiceArray}{c|c}[margin]
\alpha I & \beta I \\ \hline
\beta I & \alpha I
\end{bNiceArray}
\]
\end{proof}

\subsubsection{Combined Transformations}

\begin{proposition}  \label{prop:combined_orthogonal}
    Let $A \in \R^{2k \times 2k}$ be some matrix. If $EAF = A$ holds for all orthogonal matrices $E$ and block-orthogonal matrices $F$, then $A = \mathbf{0}$.
\end{proposition}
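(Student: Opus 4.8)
The plan is to exploit the fact that the hypothesis of the proposition is far more restrictive than the two-sided conjugations appearing in \cref{prop:orthogonal} and \cref{prop:block_orthogonal}: here $E$ and $F$ vary \emph{independently}, so it suffices to substitute two carefully chosen — in fact, almost trivial — group elements. Concretely, I would first record the two membership facts I need: the identity $I_{2k}$ is block-orthogonal (take the first form in \cref{def:block_orthogonal} with inner block $E = I_k$, which is orthogonal by \cref{def:orthogonal}), and $-I_{2k}$ is orthogonal, since $(-I_{2k})(-I_{2k})^\top = I_{2k}$. Equivalently, $-I_{2k}$ is itself block-orthogonal via the first form with inner block $-I_k$, so either slot can carry the sign flip.

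The key step is then a single specialization. Plugging $E = -I_{2k}$ and $F = I_{2k}$ into the assumed identity $EAF = A$ gives $-A = A$, hence $2A = \mathbf{0}$ and therefore $A = \mathbf{0}$. No induction, no appeal to permutation or sign-flip families, and no use of the scalar-matrix structure from the earlier propositions is required; those are only needed when the constraint preserves a nonzero subspace of matrices, which is not the case here.

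I do not expect a genuine obstacle in this proof: the only thing that needs care is confirming that the instances of $E$ and $F$ I invoke really lie in the stated classes, which is immediate from \cref{def:orthogonal} and \cref{def:block_orthogonal}. If one wanted a proof that more visibly parallels the preceding arguments, an alternative is to first fix $F = I_{2k}$, reduce to $EA = A$ for all orthogonal $E$, and then run the Step 1/Step 2 template of \cref{prop:orthogonal}; but since even that reduced relation is killed outright by $E = -I_{2k}$, the extra machinery would be redundant, so I would present the one-line version above.
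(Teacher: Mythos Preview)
Your proof is correct and essentially identical to the paper's: the paper sets $E=I$ and $F=-I$ (which is block-orthogonal via the first form with inner block $-I_k$) to obtain $A=-A$, while you swap which slot carries the sign flip, an immaterial difference. Your added care in verifying class membership of the chosen matrices is appropriate and matches the spirit of the paper's one-line argument.
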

\begin{proof} By setting $E=I$ and $F=-I$, we get $A=-A$. Therefore, $A=\mathbf{0}$.
\end{proof}

\subsubsection{Block-Swap Transformation}

\begin{definition} \label{def:block_swap}
    \defn{Block-Swap Matrix.}
    We say that a matrix $M \in \R^{2k \times 2k}$ is block-swap if it has the following form:
    \begin{equation*}
        M = 
            \begin{bNiceArray}{c|c}[margin]
              \mathbf{0} & {I} \\
              \hline
              {I} & \mathbf{0}
            \end{bNiceArray}
    \end{equation*}
    where $I \in \R^{k \times k}$ is the identity matrix.
\end{definition}

\begin{proposition} \label{prop:block_swap}
    If $M \in \R^{2k \times 2k}$ is a block-swap matrix and $F \in \R^{2k \times 2k}$ is a block-orthogonal matrix, then $FMF\tp = M$.
\end{proposition}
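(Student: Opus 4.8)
The plan is a direct case analysis over the two forms of block-orthogonal matrix allowed by \cref{def:block_orthogonal}; in each case the claim reduces to a single $2\times2$ block multiplication that collapses via $EE\tp = I$.

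First I would record an elementary observation: if $F$ is block-orthogonal with underlying orthogonal block $E$, then $F\tp$ is again block-orthogonal with the same arrangement of blocks but underlying block $E\tp$. This is immediate because every nonzero block of $F$ is a copy of the single matrix $E$, so transposing $F$ transposes each block and leaves the zero/nonzero pattern unchanged. Similarly, $M$ is symmetric, so nothing is lost by the transpose appearing on the right.

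Next I would treat the two forms in turn. If $F$ is the block-diagonal form $\mathrm{diag}(E,E)$, then $FM$ scales each block-row of $M$ by $E$, producing the matrix with $E$ in both off-diagonal blocks and zero blocks on the diagonal; right-multiplying by $F\tp = \mathrm{diag}(E\tp,E\tp)$ then scales each block-column by $E\tp$, turning each off-diagonal block into $EE\tp = I$ and keeping the diagonal zero, i.e. recovering $M$. If instead $F$ is the anti-block-diagonal form with blocks $E$, then $FM$ equals the block-diagonal matrix with $E$ in both diagonal positions, and right-multiplying by $F\tp$ (anti-block-diagonal with blocks $E\tp$) again yields off-diagonal blocks $EE\tp = I$ and zero diagonal blocks, namely $M$. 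Since these two forms exhaust \cref{def:block_orthogonal}, we conclude $FMF\tp = M$ in every case.

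I do not expect any real obstacle: the statement is essentially a one-line computation. The only point needing a moment's care is the bookkeeping for $F\tp$ — verifying it remains in the block-orthogonal family with $E$ replaced by $E\tp$ — which is exactly the elementary observation above. As an aside, one could instead package both cases together by factoring a block-orthogonal $F$ as a block-permutation times $\mathrm{diag}(E,E)$ and checking how $M$ interacts with each factor, but the explicit two-case verification is the cleanest to present.
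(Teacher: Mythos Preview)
Your proposal is correct and follows essentially the same approach as the paper: a direct two-case computation over the two forms in \cref{def:block_orthogonal}, each reducing to $EE\tp = I$. The paper writes out the block products explicitly rather than phrasing it via your preliminary observation about $F\tp$, but the argument is identical in substance.
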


\begin{proof}
\emph{Case 1. The orthogonal blocks of F are on the main diagonal.}

Assume that
\begin{equation*}
    F = \mat{E}{\zero}{\zero}{E}
\end{equation*}

Then,
\begin{align*}
    FMF\tp &\;=\; \mat{E}{\zero}{\zero}{E} \mat{\zero}{I}{I}{\zero} \mat{E\tp}{\zero}{\zero}{E\tp} \\
    &\;=\; \mat{\zero}{E}{E}{\zero} \mat{E\tp}{\zero}{\zero}{E\tp} \\
    &\;=\; \mat{\zero}{I}{I}{\zero}
\end{align*}

\medskip

\emph{Case 2. The orthogonal blocks of F are on the secondary diagonal.}

Assume that
\begin{equation*}
    F = \mat{\zero}{E}{E}{\zero}
\end{equation*}

Then,
\begin{align*}
    FMF\tp &\;=\; \mat{\zero}{E}{E}{\zero} \mat{\zero}{I}{I}{\zero} \mat{\zero}{E\tp}{E\tp}{\zero} \\
    &\;=\; \mat{E}{\zero}{\zero}{E} \mat{\zero}{E\tp}{E\tp}{\zero} \\
    &\;=\; \mat{\zero}{I}{I}{\zero}
\end{align*}
\end{proof}

\subsection{Setup}

Recall the architecture and loss:
\begin{align*}
U &= \Big[ \; X \; \Big| \; \sigma(X W^{(1)} X\tp) \, X \; \Big] &
V &= \Big[ \; U \; \Big| \; \sigma(U W^{(2)} U\tp) \, U \; \Big]
\end{align*}
\begin{align*}
z = V_{2N+1} W^{(3)} \qquad
\mathcal{L} = \| y - z \|^2
\end{align*}

where $\sigma$ to denotes the softmax function with causal masking, $[\; \cdot \;|\; \cdot \;]$ denotes matrix concatenation, and
\begin{align*}
&W^{(1)} \in \mathbb{R}^{2D \times 2D}&\qquad
&W^{(2)} \in \mathbb{R}^{4D \times 4D}&\qquad
&W^{(3)} \in \mathbb{R}^{8D \times D}& \\
&U \in \mathbb{R}^{(2N + 1) \times 4D}&
&V \in \mathbb{R}^{(2N + 1) \times 8D}&
&z \in \R^{D}\qquad&\\
&X \in \mathbb{R}^{(2N + 1) \times 2D}&
&y \in \mathbb{R}^{D}&
\end{align*}

The data is generated as:

\begin{align*}
X_{2i-1} = \big[ \; a_i \; | \; p_i \; \big]\qquad 
X_{2i} = \big[ \; b_i \; | \; p_iM \; \big]\qquad
\forall i \in \{ 1, \ldots, N \}
\end{align*}
\begin{align*}
X_{2N+1} = \big[ \; a_q \; | \; 0 \; \big]\qquad
y = b_q
\end{align*}

where

\begin{equation*}
    a_i, \;b_i, \;p_i \;\in\; \R^D
    \qquad
    q\in\{ 1, 2, \ldots, N\}
    \qquad
    M =
    \begin{bNiceArray}{cc}
      {\mathbf{0}} & I \\
      I & {\mathbf{0}}
    \end{bNiceArray}
\end{equation*}

All vectors are treated as \textbf{row vectors}.

\subsection{Additional Notation}

We introduce
\begin{align*}
&S = X \W{1} X\tp &\qquad
&T = \sigma(S)&\\
&P=U \W{2} U\tp &\qquad
&Q=\sigma(P)
\end{align*}
where $S,\, T,\, P,\, Q \,\in\, \R^{(2N+1)\times(2N+1)}$. This gives
\begin{align*}
U &= \Big[ \; X \; \Big| \; T X \; \Big] &
V &= \Big[ \; U \; \Big| \; Q U \; \Big]
\end{align*}

We also introduce notation for all blocks of size $D$:

\begin{equation*}
\begin{array}{c}
    X =
    \Big[\;
      {X_1} \quad {X_2}
    \;\Big]
    \qquad
    U =
    \Big[\;
      {U_1} \quad {U_2} \quad {U_3} \quad {U_4}
    \;\Big]
    
    \\[12pt]

    V =
    \Big[\;
      {V_1} \quad {V_2} \quad {V_3} \quad {V_4} \quad
      {V_5} \quad {V_6} \quad {V_7} \quad {V_8}
    \;\Big]
    
    \\[16pt]

    \W{1} =
    \begin{bNiceArray}{cc}
      \W{1}[11] & \W{1}[12] \\[8pt]
      \W{1}[21] & \W{1}[22]
    \end{bNiceArray}
    
    \qquad

    \W{2} =
    \begin{bNiceArray}{cccc}
      \W{2}[11] & \W{2}[12] & \W{2}[13] & \W{2}[14] \\[8pt]
      \W{2}[11] & \W{2}[12] & \W{2}[13] & \W{2}[14] \\[8pt]
      \W{2}[11] & \W{2}[12] & \W{2}[13] & \W{2}[14] \\[8pt]
      \W{2}[11] & \W{2}[12] & \W{2}[13] & \W{2}[14]
    \end{bNiceArray}
    
    \\[48pt]

    \W{3} =
    \begin{bNiceArray}{cccccccc}
      \W{3}[1] & \W{3}[2] & \W{3}[3] & \W{3}[4] & \W{3}[5] & \W{3}[6] & \W{3}[7] & \W{3}[8]
    \end{bNiceArray}

\end{array}
\end{equation*}

\subsection{Data Rotations}

We apply an orthogonal transformation $E$  to the items and labels, and a block-orthogonal transformation $F$ to the positional embeddings:

\begin{equation*}
    a'_i = a_iE
    \qquad
    b'_i = b_iE
    \qquad
    p'_i = p_iF
    \qquad
    \forall i \in \{\, 1 ,\, \ldots ,\, N \, \} 
\end{equation*}

where $E$ and $F$ satisfy \cref{def:orthogonal,def:block_orthogonal}, respectively. We refer to the new variables as $X'$, $y'$, $U'$, $V'$, $z'$, and $\Loss'$.

Since the data is isotropic, we have that $\E{\Loss} = \E{\Loss'}$. By the linearity of expectation and differentiation, we obtain
\begin{align*}
    \E{\df{\Loss}{\W{k}}} &\;=\; \E{\df{\Loss'}{\W{k}}}
\end{align*}

This also holds for all sub-blocks of $\W{1}$, $\W{2}$, and $\W{3}$,
\begin{align*}
    \E{\df{\Loss}{\W{k}[ij]}}  &\;=\; \E{\df{\Loss'}{\W{k}[ij]}}
\end{align*}

However, as we show below, our rotation induces specific transformations of the gradient blocks. Using \cref{prop:orthogonal,prop:block_orthogonal,prop:combined_orthogonal}, we are able to show that each gradient block has the desired structure.

Specifically, for each gradient block, we will show that one of the following four conditions holds for all $E$ and $F$, implying the desired structure:
\begin{align*}
    \E{\df{\Loss'}{\W{k}[ij]}}  \;=\; E \;\E{\df{\Loss'}{\W{k}[ij]}} E\tp
    &\quad\Longrightarrow\quad
    \E{\df{\Loss}{\W{k}[ij]}}  \;=\; \alpha \, I
    \\[8pt]
    \E{\df{\Loss'}{\W{k}[ij]}}  \;=\; F \;\E{\df{\Loss'}{\W{k}[ij]}} F\tp
    &\quad\Longrightarrow\quad
    \E{\df{\Loss}{\W{k}[ij]}}  \;=\; 
    \begin{bNiceArray}{cc}[margin]
      \alpha \, I & \beta \, I \\[8pt]
      \beta \, I & \alpha \, I
    \end{bNiceArray}
    \\[8pt]
    \E{\df{\Loss'}{\W{k}[ij]}}  \;=\; E \;\E{\df{\Loss'}{\W{k}[ij]}} F\tp
    &\quad\Longrightarrow\quad
    \E{\df{\Loss}{\W{k}[ij]}}  \;=\; \mathbf{0}
    \\[8pt]
    \E{\df{\Loss'}{\W{k}[ij]}}  \;=\; F \;\E{\df{\Loss'}{\W{k}[ij]}} E\tp
    &\quad\Longrightarrow\quad
    \E{\df{\Loss}{\W{k}[ij]}}  \;=\; \mathbf{0}
\end{align*}

\subsection{Forward Pass}

We will now observe how our rotation changes the intermediate and final results of our model.

First, note the rotated inputs and outputs:
\begin{equation*}
    X'_1 = X_1E \qquad
    X'_2 = X_2F \qquad
    y' = yE
\end{equation*}

Recall that we are assuming that $\W{1}$, $\W{2}$, and $\W{3}$ already have the desired structure, with the goal to prove that the gradient has the same structure:
\begin{align*}
    W^{(1)} &\;=\;
    \begin{bNiceArray}{c|c}[margin]
      {\alpha_1 I} & \mathbf{0} \\
      \hline
      \mathbf{0} & {\alpha_2 I + \alpha_3 M}
    \end{bNiceArray}
    \\[4pt]
    W^{(2)} &\;=\;
    \begin{bNiceArray}{c|c|c|c}[margin]
      {\beta_1 I} & \mathbf{0} & {\beta_2 I} & \mathbf{0} \\
      \hline
      \mathbf{0} & {\beta_3 I + \beta_4 M} & \mathbf{0} & {\beta_5 I + \beta_6 M} \\
      \hline
      {\beta_7 I} & \mathbf{0} & {\beta_8 I} & \mathbf{0} \\
      \hline
      \mathbf{0} & {\beta_9 I + \beta_{10} M} & \mathbf{0} & {\beta_{11} I + \beta_{12} M}
    \end{bNiceArray}
    \\[4pt]
    W^{(3)} &\;=\;
    \begin{bNiceArray}{c|c|c|c|c|c|c|c}[margin]
      {\gamma_1 I} & \mathbf{0} &
      {\gamma_2 I} & \mathbf{0} &
      {\gamma_3 I} & \mathbf{0} &
      {\gamma_4 I} & \mathbf{0}
    \end{bNiceArray}
\end{align*}

\subsubsection{First Layer}

The first attention layer gives:
\begin{align*}
    S &\;=\; X \W{1}X\tp \\
    &\;=\; X_{1} \W{1}[11] X_{1}\tp + X_{1} \W{1}[12] X_{2}\tp + X_{2} \W{1}[21] X_{1}\tp + X_{2} \W{1}[22] X_{2}\tp \\
    &\;=\; \alpha_1 X_{1} X_{1}\tp + \alpha_2 X_{2} X_{2}\tp + \alpha_3 X_{2} M X_{2}\tp
    \\[4pt]
    S' &\;=\; X' \W{1}{X'}\tp \\
    &\;=\; X'_{1} \W{1}[11] {X'_{1}}\tp + X'_{1} \W{1}[12] {X'_{2}}\tp + X'_{2} \W{1}[21] {X'_{1}}\tp + X'_{2} \W{1}[22] {X'_{2}}\tp \\
    &\;=\; \alpha_1 X'_{1} {X'_{1}}\tp + \alpha_2 X'_{2} {X'_{2}}\tp + \alpha_3 X'_{2} M {X'_{2}}\tp\\
    &\;=\; \alpha_1 X_{1} E E\tp X_{1}\tp + \alpha_2 X_{2} F F\tp X_{2}\tp + \alpha_3 X_{2} F M F\tp X_{2}\tp\\
    &\;=\; \alpha_1 X_{1} X_{1}\tp + \alpha_2 X_{2} X_{2}\tp + \alpha_3 X_{2} M X_{2}\tp
\end{align*}

Therefore, $S' = S$ and $T' = T = \sigma(S)$. This gives us:
\begin{equation*}
    U'_1 = U_1E \qquad
    U'_2 = U_2F \qquad
    U'_3 = U_3E \qquad
    U'_4 = U_4F
\end{equation*}

\subsubsection{Second Layer}
The second attention layer gives:
\begin{align*}
    P &\;=\; U \W{2}U\tp \\
    &\;=\; \sum U_i \W{2}[ij] {U_j}\tp  \\
    &\;=\; \beta_1U_1{U_1}\tp + \beta_2U_1U_3\tp + \beta_7U_3U_1\tp + \beta_8U_3U_3\tp \\&\qquad+\;
            \beta_3U_2U_2\tp + \beta_5U_2U_4\tp + \beta_9U_4U_2\tp + \beta_{11}U_4U_4\tp
    \\&\qquad+\;
            \beta_4U_2MU_2\tp + \beta_6U_2MU_4\tp + \beta_{10}U_4MU_2\tp + \beta_{12}U_4MU_4\tp
    \\[4pt]
    P' &\;=\; U' \W{2}{U'}\tp \\
    &\;=\; \sum U'_i \W{2}[ij] {U'_j}\tp  \\
    &\;=\; \beta_1U'_1{U'_1}\tp + \beta_2U'_1{U'_3}\tp + \beta_7U'_3{U'_1}\tp + \beta_8U'_3{U'_3}\tp \\&\qquad+\;
            \beta_3U'_2{U'_2}\tp + \beta_5U'_2{U'_4}\tp + \beta_9U'_4{U'_2}\tp + \beta_{11}U'_4{U'_4}\tp
    \\&\qquad+\;
            \beta_4U'_2M{U'_2}\tp + \beta_6U'_2M{U'_4}\tp + \beta_{10}U'_4M{U'_2}\tp + \beta_{12}U'_4M{U'_4}\tp  \\
    &\;=\; \beta_1U_1 EE\tp {U_1}\tp + \beta_2U_1 EE\tp U_3\tp + \beta_7U_3 EE\tp U_1\tp + \beta_8U_3 EE\tp U_3\tp
    \\&\qquad+\;
            \beta_3U_2 FF\tp U_2\tp + \beta_5U_2 FF\tp U_4\tp + \beta_9U_4 FF\tp U_2\tp + \beta_{11}U_4 FF\tp U_4\tp
    \\&\qquad+\;
            \beta_4U_2 FMF\tp U_2\tp + \beta_6U_2 FMF\tp U_4\tp + \beta_{10}U_4 FMF\tp U_2\tp + \beta_{12} FMF\tp U_4\tp \\
    &\;=\; \beta_1U_1{U_1}\tp + \beta_2U_1U_3\tp + \beta_7U_3U_1\tp + \beta_8U_3U_3\tp \\&\qquad+\;
            \beta_3U_2U_2\tp + \beta_5U_2U_4\tp + \beta_9U_4U_2\tp + \beta_{11}U_4U_4\tp
    \\&\qquad+\;
            \beta_4U_2MU_2\tp + \beta_6U_2MU_4\tp + \beta_{10}U_4MU_2\tp + \beta_{12}U_4MU_4\tp
\end{align*}

Therefore, $P' = P$ and $Q' = Q = \sigma(P)$. This gives us:
\begin{align*}
    V'_1 = V_1E \qquad
    V'_2 = V_2F \qquad
    V'_3 = V_3E \qquad
    V'_4 = V_4F \\[4pt]
    V'_5 = V_5E \qquad
    V'_6 = V_6F \qquad
    V'_7 = V_7E \qquad
    V'_8 = V_8F
\end{align*}

\subsubsection{Output Layer}

Finally, the output layer gives:
\begin{align*}
    z &\;=\; V_{2N+1}\W{3} \\
    &\;=\; \sum(V_i)_{2N+1}\W{3}[i] \\
    &\;=\; \gamma_1(V_1)_{2N+1} + \gamma_2(V_3)_{2N+1} + \gamma_3(V_5)_{2N+1} + \gamma_4(V_7)_{2N+1}
    \\[4pt]
    z' &\;=\; V'_{2N+1}\W{3} \\
    &\;=\; \sum(V'_i)_{2N+1}\W{3}[i] \\
    &\;=\; \gamma_1(V'_1)_{2N+1} + \gamma_2(V'_3)_{2N+1} + \gamma_3(V'_5)_{2N+1} + \gamma_4(V'_7)_{2N+1} \\
    &\;=\; \gamma_1(V_1)_{2N+1}E + \gamma_2(V_3)_{2N+1}E + \gamma_3(V_5)_{2N+1}E + \gamma_4(V_7)_{2N+1}E \\
    &\;=\; zE
\end{align*}

\subsection{Backward Pass}

We now show how the rotation transforms the gradient of each weight block.

\subsubsection{Output Layer}

\begin{align*}
\df{\Loss}{z} &\;=\; 2(z - y) \\[4pt]
\df{\Loss'}{z'} &\;=\;2(z'-y') \;=\; 2(zE - yE) \;=\; 2(z-y)E \;=\; \df{\Loss}{z}E\\[8pt]
\df{\Loss}{\W{3}[i]} &\;=\; \left(\left(V_i \right)_{2N + 1}\right)\tp \left( \df{\Loss}{z} \right) \\[4pt]
\df{\Loss'}{\W{3}[i]} &\;=\; \left(\left(V'_i \right)_{2N + 1}\right)\tp \left( \df{\Loss'}{z'} \right)
\end{align*}

\vspace{20pt}
\textbf{Scalar Blocks.} For all $i \in \{1, 3,5,7\}$, we get
\begin{align*}
    \df{\Loss'}{\W{3}[i]} &\;=\; \left(\left(V'_i \right)_{2N + 1}\right)\tp \left( \df{\Loss'}{z'} \right) \\
    &\;=\; E\tp\left(\left(V_i \right)_{2N + 1}\right)\tp \left( \df{\Loss}{z} \right)E \\
    &\;=\; E\tp \df{\Loss}{\W{3}[i]}E \\[4pt]
\end{align*}
Taking the expectation over the entire data distribution, we obtain that the following holds for any orthogonal transformation $E$:
\begin{align*}
    \E{\df{\Loss}{\W{3}[i]}} \;=\;
    \E{\df{\Loss'}{\W{3}[i]}} \;=\;
    \E{E\tp\df{\Loss'}{\W{3}[i]}E} \;=\;
    E\tp \, \E{\df{\Loss}{\W{3}[i]}} \, E
\end{align*}

Applying \cref{prop:orthogonal}, we get that
\begin{align*}
    \E{\df{\Loss}{\W{3}[i]}} \;=\; \alpha \, I
\end{align*}

\vspace{20pt}

\textbf{Zero Blocks.} For all $i \in \{2,4,6,8\}$, we get
\begin{align*}
    \df{\Loss'}{\W{3}[i]} &\;=\; \left(\left(V'_i \right)_{2N + 1}\right)\tp \left( \df{\Loss'}{z'} \right) \\
    &\;=\; F\tp\left(\left(V_i \right)_{2N + 1}\right)\tp \left( \df{\Loss}{z} \right)E \\
    &\;=\; F\tp \df{\Loss}{\W{3}[i]}E
\end{align*}

Taking the expectation over the entire data distribution, we obtain that the following holds for any $E$ and $F$:
\begin{align*}
    \E{\df{\Loss}{\W{3}[i]}} \;=\;
    F\tp \, \E{\df{\Loss}{\W{3}[i]}} \, E
\end{align*}

Applying \cref{prop:combined_orthogonal}, we get that
\begin{align*}
    \E{\df{\Loss}{\W{3}[i]}} \;=\; \zero
\end{align*}

\vspace{20pt}

\textbf{Gradient Propagation} Applying the chain rule, we get
\begin{align*}
    \df{\Loss}{V_{2N+1}} \;=\; \df{\Loss}{z} \, \df{z}{V_{2N+1}}  \;=\; 2(z-y) \, \W{3}\tp 
\end{align*}

For all $i \in \{1,3,5,7\}$, we get
\begin{align*}
    \df{\Loss'}{(V'_i)_{2N+1}} \;=\; 2(z' - y') \, \W{3}[i] \tp \;=\; 2(z - y) \, E \, \W{3}[i] \tp
     \;=\; \df{\Loss}{(V_i)_{2N+1}} E
\end{align*}

For all $i \in \{2,4,6,8\}$, we get
\begin{align*}
    \df{\Loss}{(V_i)_{2N+1}} \;=\; 2(z - y) \, \W{3}[i]
     \;=\; \zero
\end{align*}

For all $i \in \{1,\ldots,8\}$ and $j \le2N$, we get
\begin{align*}
    \df{\Loss}{(V_i)_{j}} \;=\;
    \df{\Loss}{(V'_i)_{j}} \;=\;
     \zero
\end{align*}

Putting everything together, the following holds for all $j \le 2N + 1$
\begin{align} \label{eq:grad_v_odd}
    &\df{\Loss}{(V_i)_{j}} \;=\;
    \df{\Loss'}{(V'_i)_{j}} E
    &
    \text{if }\;i \in \{1,3,5,7\}
    \\[8pt] \label{eq:grad_v_even}
    &\df{\Loss}{(V_i)_{j}} \;=\;
    \df{\Loss'}{(V'_i)_{j}} \;=\; \zero
    &
    \text{if }\;i \in \{2,4,6,8\}
\end{align}

\subsubsection{Second Layer}

Since $V = [ \; U \;| \; QU \;  ]$, we have that
\begin{equation*}
\df{(V_i)_j}{Q_{jk}} \;=\;
\begin{dcases*}
    U_{i-4} & $i>4$\\
    \zero & $i \le 4$
\end{dcases*}
\end{equation*}

Therefore,
\begin{align*}
&\df{\Loss'}{(V'_i)_j} \df{(V'_i)_j}{Q'_{jk}} \;=\;
\df{\Loss}{(V_i)_j} \; E\tp E \; \df{(V_i)_j}{Q_{jk}} \;=\;
\df{\Loss}{(V_i)_j} \df{(V_i)_j}{Q_{jk}} &i \in \{5,7\}
\\[8pt]
&\df{\Loss'}{(V'_i)_j} \df{(V'_i)_j}{Q'_{jk}} \;=\;
\df{\Loss}{(V_i)_j} \df{(V_i)_j}{Q_{jk}}  \;=\;
\zero \quad &i \not\in \{5,7\}
\end{align*}

Additionally, since $P' = P$ and $Q'=Q$, we have that
\begin{equation*}
    \df{Q'_{jk}}{P'_{jl}} = \df{Q_{jk}}{P_{jl}}
\end{equation*}

This gives us
\begin{align*}
    \df{\Loss'}{P'_{kl}} &\;=\; 
    \sum_{ij} \df{\Loss'}{(V'_i)_j} \df{(V'_i)_j}{Q'_{kj}} \df{Q'_{kj}}{P'_{kl}} 
    \\&\;=\;
    \sum_{ij} \df{\Loss}{(V_i)_k} \df{(V_i)_k}{Q_{kj}} \df{Q_{kj}}{P_{kl}} \\
    &\;=\;
    \df{\Loss}{P_{kl}}
\end{align*}

Additionally,
\begin{align*}
\df{\Loss}{\W{2}[ij]}
&\;=\; \sum_{kl} \df{\Loss}{P_{kl}} \df{P_{kl}}{\W{2}[ij]}
\;=\; \sum_{kl} \df{\Loss}{P_{kl}} (U_i)_k\tp \, (U_j)_l
\end{align*}
which gives us the transformed gradient:
\begin{align*}
\df{\Loss'}{\W{2}[ij]}
&\;=\; \sum_{kl} \df{\Loss'}{P'_{kl}} \df{P'_{kl}}{\W{2}[ij]}
\;=\; \sum_{kl} \df{\Loss}{P_{kl}} (U'_i)_k\tp \, (U'_j)_l\\[8pt]
&\;=\; \begin{dcases*}
    E\tp \df{\Loss}{\W{2}[ij]} E &\text{if $i$ odd, $j$ odd}\\
    E\tp \df{\Loss}{\W{2}[ij]} F &\text{if $i$ odd, $j$ even}\\
    F\tp \df{\Loss}{\W{2}[ij]} E &\text{if $i$ even, $j$ odd}\\
    F\tp \df{\Loss}{\W{2}[ij]} F &\text{if $i$ even, $j$ even}\\
\end{dcases*}
\end{align*}

The desired structure follows from computing the expected gradient over the entire distribution and applying \cref{prop:orthogonal,prop:block_orthogonal,prop:combined_orthogonal}.

\vspace{20pt}

\textbf{Gradient Propagation}

Applying the chain rule, we get
\begin{equation} \label{eq:grad_u}
    \df{\Loss'}{(U'_i)_j} \;=\; \df{\Loss'}{(V'_i)_j} + {Q'}\tp \df{\Loss'}{(V'_{i+4})_j} + \sum _{k}\df{\Loss'}{P'_{jk}} \df{P'_{jk}}{(U'_i)_j}
\end{equation}

We also have that
\begin{equation} \label{eq:dp_du}
    \df{P_{jk}}{(U_i)_j} \;=\; (U_i)_k \qquad\qquad
    \df{P'_{jk}}{(U'_i)_j} \;=\; (U'_i)_k \;=\; \begin{dcases*}
        (U_i)_k \; E &\text{if $i$ odd}\\[8pt]
        (U_i)_k \; F &\text{if $i$ even}
    \end{dcases*}
\end{equation}

Cobmining \cref{eq:grad_v_odd,eq:grad_v_even,eq:grad_u,eq:dp_du}, we get
\begin{equation*}
    \df{\Loss'}{(U'_i)_j} \;=\; 
\begin{dcases*}
    \df{\Loss'}{(U'_i)_j} E &\text{if $i$ odd}\\[8pt]
    \df{\Loss'}{(U'_i)_j} F &\text{if $i$ even}
\end{dcases*}
\end{equation*}

\subsubsection{First Layer}

Through similar derivations as before, we obtain
\begin{align*}
    \df{\Loss'}{S'_{kl}} &\;=\; 
    \sum_{ij} \df{\Loss'}{(U'_i)_j} \df{(U'_i)_j}{T'_{kj}} \df{T'_{kj}}{S'_{kl}} 
    \\&\;=\;
    \sum_{ij} \df{\Loss}{(U_i)_k} \df{(U_i)_k}{T_{kj}} \df{T_{kj}}{S_{kl}} \\
    &\;=\;
    \df{\Loss}{S_{kl}}
\end{align*}
and
\begin{align*}
\df{\Loss}{\W{1}[ij]}
&\;=\; \sum_{kl} \df{\Loss}{S_{kl}} \df{S_{kl}}{\W{1}[ij]}
\;=\; \sum_{kl} \df{\Loss}{S_{kl}} (X_i)_k\tp \, (X_j)_l
\end{align*}

This gives us the transformed gradient:
\begin{align*}
\df{\Loss'}{\W{1}[11]} \;=\; E\tp \df{\Loss}{\W{1}[11]}E &\hspace{20pt}&
\df{\Loss'}{\W{1}[12]} \;=\; E\tp \df{\Loss}{\W{1}[12]}F\\[8pt]
\df{\Loss'}{\W{1}[21]} \;=\; F\tp \df{\Loss}{\W{1}[21]}E &&
\df{\Loss'}{\W{1}[22]} \;=\; F\tp \df{\Loss}{\W{1}[22]}F
\end{align*}
which implies the desired structure.


\newpage
\section{Tight Bound Proof for \cref{thm:tight-bound}}
\label{app:tight_bound}

\subsection{Summary}

To simplify notation within this section, we refer to the parameters $\alpha_3, \beta_2$, and $\gamma_3$, simply as $\alpha$, $\beta$, and $\gamma$, respectively. We show that $\gamma$ is the first parameter to reach the value $1/2$ after a time $T_1 = \Theta(N)$, then remains bounded. Later, $\beta$ reaches $1/2$ after an additional time $T_2 = \Theta(N^2)$. Finally, $\alpha$ reaches $1/2$ after an additional time $T_3 = O(N^2)$. This gives the total times $T_{\alpha} = T_1 = \Theta(N)$, $T_{\beta} = T_1 + T_2 = \Theta(N) + \Theta(N^2) = \Theta(N^2)$, and $T_{\gamma} = T_1 + T_2 + T_3 = \Theta(N) + \Theta(N^2) + O(N^2) = \Theta(N^2)$. Each step is proven by appropriately bounding the gradient updates. We give the full proof below.

\subsection{Setup}

Recall the architecture and loss:

\begin{align*}
U &= \Big[ \; X \; \Big| \; \sigma(X W^{(1)} X\tp) X \; \Big] &
V &= \Big[ \; U \; \Big| \; \sigma(U W^{(2)} U\tp) U \; \Big]
\end{align*}
\begin{align*}
z = V_{2N+1} \, W^{(3)} \qquad
\mathcal{L} = \| y - z \|^2
\end{align*}

where $[\; \cdot \;|\; \cdot \;]$ denotes matrix concatenation, and
\begin{align*}
&W^{(1)} \in \mathbb{R}^{2D \times 2D}&\qquad
&W^{(2)} \in \mathbb{R}^{4D \times 4D}&\qquad
&W^{(3)} \in \mathbb{R}^{8D \times D}& \\
&U \in \mathbb{R}^{(2N + 1) \times 4D}&
&V \in \mathbb{R}^{(2N + 1) \times 8D}&
&z \in \R^{D}\qquad&\\
&X \in \mathbb{R}^{(2N + 1) \times 2D}&
&y \in \mathbb{R}^{D}&
\end{align*}

We use $\sigma$ to denote the softmax function with causal masking. We apply a causal mask that prevents a position from attending to itself, which is not a standard practice, but it greatly simplifies the proofs.

The data is generated as:

\begin{align*}
X_{2i-1} = \big[ \; a_i \; | \; p_i \; \big]\qquad 
X_{2i} = \big[ \; b_i \; | \; M p_i \; \big]\qquad
\forall i \in \{ 1, \ldots, N \}
\end{align*}
\begin{align*}
X_{2N+1} = \big[ \; a_q \; | \; 0 \; \big]\qquad
y = b_q
\end{align*}

where

\begin{equation*}
    a_i, \;b_i, \;p_i \;\in\; \R^D
    \qquad
    q\in\{ 1, 2, \ldots, N\}
    \qquad
    M =
    \begin{bNiceArray}{c|c}
      {\mathbf{0}} & I \\
      \hline
      I & {\mathbf{0}}
    \end{bNiceArray}
\end{equation*}

\subsection{Loss Function}

We begin by deriving a closed-form expression of the loss in terms of the three parameters.

The orthonormal inputs give us the following attention scores in the first layer:
\begin{equation*}
    (XW^{(1)}X^\top)_{ij} = \begin{cases}
\alpha & i = 2k,\; j=i-1\\
0 &\text{otherwise}
\end{cases}
\end{equation*}

Applying the softmax attention with causal masking gives us:
\begin{equation*}
\label{eq:attention_layer1}
    \sigma(XW^{(1)}X^\top)_{ij} = \begin{dcases*}
\frac{e^\alpha}{i - 2 + e^\alpha} & $i = 2k,\; j=i-1$\\
\frac{1}{i - 2 + e^\alpha} & $i = 2k,\; j \ne i-1$\\
\frac{1}{i - 1} & $i = 2k+1$
\end{dcases*}
\end{equation*}

From \cref{asspt:query_last}, the target label is the last element in the sequence, following immediately after the queried item. This means that only the target label will contain the queried item after the first layer. Therefore, the target label will be the only position attended by the query:
\begin{equation*}
    (UW^{(2)}U^\top)_{2N+1,\,i} = \begin{cases}
        \beta\frac{e^\alpha}{2N-2+e^\alpha} & i = 2N\\
        0 &\text{otherwise}
\end{cases}
\end{equation*}

Applying the softmax attention gives:
\begin{equation*}
    \sigma(UW^{(2)}U^\top)_{2N+1,\,i} = \begin{cases}
        \frac{s}{s+2N-1} & i = 2N\\
        \frac{1}{s+2N-1} &\text{otherwise}
\end{cases}
\end{equation*}
where $s = e^{\beta\frac{e^\alpha}{2N-2+e^\alpha}}$.

Applying the output projection layer will give us:
\begin{equation*}
    z = \frac{\gamma}{s+2N-1}\Biggl( s\,b_N + a_N + \sum_{i=1}^{N-1}\Big( a_i + b_i \Big) \Biggr)
\end{equation*}

The final loss will be:
\begin{align*}
    \mathcal{L} &= \|z-b_i \|^2 = \|z\|^2 - 2\,z^\top b_i + \|b_i\|^2 \\
                &= \gamma^2 \frac{s^2 + 2N - 1}{(s + 2N - 1)^2} - 2\gamma \frac{s}{s + 2N - 1} + 1
\end{align*}
where $s = e^{\beta\frac{e^\alpha}{2N-2+e^\alpha}}$.

Note that as long as inputs are orthonormal and the target label is in the last position, the loss only depends on $\alpha, \beta, \gamma,$ and $N$. Any distribution over orthonormal inputs will give the same expected loss.

\subsection{Loss Gradient}

We now proceed to compute the partial derivatives of the loss function with respect to each of the three parameters.
\medskip
\subsubsection{Auxiliary definitions}  
\begin{align*}
G &= e^{\alpha} + 2N - 2, 
&
F &= 2N - 1, \\[6pt]
s &= \exp\!\Bigl(\frac{\beta\,e^{\alpha}}{G}\Bigr),
&
r &= s + F, \\[6pt]
\mathcal{L} &= \frac{\gamma^{2}\,(s^{2} + F)}{r^{2}}
            \;-\; 2\gamma\,\frac{s}{r}
            \;+\; 1.
\end{align*}

\medskip

\subsubsection{Partial derivative w.r.t.\ \(\gamma\)}  
\begin{align*}
\frac{\partial \mathcal{L}}{\partial \gamma}
&= \frac{\partial}{\partial \gamma}\Bigl(\tfrac{\gamma^{2}(s^{2}+F)}{r^{2}} - 2\gamma\,\tfrac{s}{r} + 1\Bigr) \\[6pt]
&= 2\gamma\,\frac{s^{2} + F}{r^{2}}
   \;-\; 2\,\frac{s}{r}
\end{align*}

\bigskip

\subsubsection{Partial derivative w.r.t.\ \(s\)}  
\begin{align*}
\frac{\partial \mathcal{L}}{\partial s}
&= \frac{\partial}{\partial s}\Bigl(\tfrac{\gamma^{2}(s^{2}+F)}{r^{2}}\Bigr)
  \;-\; 2\gamma\,\frac{\partial}{\partial s}\Bigl(\tfrac{s}{r}\Bigr) \\[6pt]
&= \gamma^{2}\,\frac{2s\,r^{2} - (s^{2}+F)\,2r\,\frac{\partial r}{\partial s}}{r^{4}}
  \;-\; 2\gamma\,\frac{r - s\,\frac{\partial r}{\partial s}}{r^{2}}
\end{align*}
But since $\partial r/\partial s = 1$,
\begin{align*}
\frac{\partial \mathcal{L}}{\partial s} &= \frac{2\gamma^{2}s\,r - 2\gamma^{2}(s^{2}+F)}{r^{3}}
  \;-\; 2\gamma\,\frac{r - s}{r^{2}} \\[6pt]
&= 2F\,\Bigl(\frac{\gamma^{2}(s-1)}{r^{3}} - \frac{\gamma}{r^{2}}\Bigr)
\end{align*}

\bigskip

\subsubsection{Derivatives of \(s\)}  
\[
s = \exp\!\Bigl(\tfrac{\beta\,e^{\alpha}}{G}\Bigr)
\quad\Longrightarrow\quad
\begin{cases}
\displaystyle \frac{\partial s}{\partial \alpha}
= s\,\frac{\partial}{\partial \alpha}\Bigl(\tfrac{\beta\,e^{\alpha}}{G}\Bigr)
= s\,\frac{\beta\,e^{\alpha}(G - e^{\alpha})}{G^{2}}
= s\,\frac{2(N-1)\,\beta\,e^{\alpha}}{G^{2}} \\[8pt]
\displaystyle \frac{\partial s}{\partial \beta}
= s\,\frac{\partial}{\partial \beta}\Bigl(\tfrac{\beta\,e^{\alpha}}{G}\Bigr)
= s\,\frac{e^{\alpha}}{G}
\end{cases}
\]

\bigskip

\subsubsection{Applying the chain‐rule results}  
\begin{align*}
\frac{\partial \mathcal{L}}{\partial \alpha}
&= \frac{\partial \mathcal{L}}{\partial s}\,\frac{\partial s}{\partial \alpha}
= 2F\,\Bigl(\tfrac{\gamma^{2}(s-1)}{r^{3}} - \tfrac{\gamma}{r^{2}}\Bigr)
  \;\times\; s\,\frac{2(N-1)\,\beta\,e^{\alpha}}{G^{2}} \\[6pt]
&= \frac{4\,\beta\,(N-1)\,F\,s\,e^{\alpha}}{G^{2}}
  \;\Bigl(\frac{\gamma^{2}(s-1)}{r^{3}} - \frac{\gamma}{r^{2}}\Bigr)
\\[12pt]
\frac{\partial \mathcal{L}}{\partial \beta}
&= \frac{\partial \mathcal{L}}{\partial s}\,\frac{\partial s}{\partial \beta}
= 2F\,\Bigl(\tfrac{\gamma^{2}(s-1)}{r^{3}} - \tfrac{\gamma}{r^{2}}\Bigr)
  \;\times\; s\,\frac{e^{\alpha}}{G} \\[6pt]
&= \frac{2\,F\,s\,e^{\alpha}}{G}
  \;\Bigl(\frac{\gamma^{2}(s-1)}{r^{3}} - \frac{\gamma}{r^{2}}\Bigr)
\end{align*}

\bigskip

\subsubsection{Final results}  
\begin{align*}
\frac{\partial \mathcal{L}}{\partial \alpha}
\;&=\; \frac{4\,\beta\,(N-1)\,F\,s\,e^{\alpha}}{G^{2}}
  \;\Bigl(\frac{\gamma^{2}(s-1)}{r^{3}} - \frac{\gamma}{r^{2}}\Bigr)
\\[12pt]
\frac{\partial \mathcal{L}}{\partial \beta}
\;&=\; \frac{2\,F\,s\,e^{\alpha}}{G}
  \;\Bigl(\frac{\gamma^{2}(s-1)}{r^{3}} - \frac{\gamma}{r^{2}}\Bigr)
\\[12pt]
\frac{\partial \mathcal{L}}{\partial \gamma}
\;&=\; 2\gamma\,\frac{s^{2} + F}{r^{2}}
   \;-\; 2\,\frac{s}{r}
\end{align*}

\subsubsection{Verification}

We verify the correctness of the previous results using automated symbolic differentiation with the \textit{SymPy} library. The code is provided with this paper. 

\subsection{Emergence of In-Context Learning}

Combining the previously obtained loss derivatives with the zero initialization, we obtain the full set of constraints that determine our training trajectory:

\begin{equation*}
    \alpha(0) = \beta(0) = \gamma(0) = 0
\end{equation*}
\begin{align*}
\frac{\partial \alpha}{\partial t}
\;&=\; \frac{2\,\beta\,(2N-2)\,(2N-1)\,s\,e^{\alpha}}{(e^\alpha + 2N -2)^{2}}
  \;\Biggl(\frac{\gamma}{(s + 2N-1)^{2}} - \frac{\gamma^{2}(s-1)}{(s + 2N-1)^{3}} \Biggr)
\\[12pt]
\frac{\partial \beta}{\partial t}
\;&=\; \frac{2\,(2N-1)\,s\,e^{\alpha}}{e^\alpha + 2N -2}
  \;\Biggl(\frac{\gamma}{(s + 2N-1)^{2}} - \frac{\gamma^{2}(s-1)}{(s + 2N-1)^{3}} \Biggr)
\\[12pt]
\frac{\partial \gamma}{\partial t}
\;&=\; 2\,\frac{s}{s + 2N-1} \;-\; 2\gamma\,\frac{s^{2} + 2N - 1}{(s + 2N-1)^{2}}
\end{align*}
where $s = \exp\!\Bigl(\beta\,\frac{e^{\alpha}}{e^\alpha + 2N - 2}\Bigr)$.  

We are interested in the first time $t_{\text{ICL}}$ when all three parameters are greater than $1/2$. As we show below, the parameters always reach this value in a specific order: first $\gamma$, then $\beta$, and finally $\alpha$.

We find the total time by breaking it down into three different times, one for each parameter:
\begin{equation*}
    t_{\text{ICL}} = T_1 + T_2 + T_3
\end{equation*}

We show that $\gamma$ emerges in $T_1 = \Theta(N)$, $\beta$ emerges after another $T_2=\Theta(N^2)$, and finally $\alpha$ emerges after another $T_3 = O(N^2)$. This gives the total time:
\begin{equation*}
t_{\text{ICL}} = \Theta(N) + \Theta(N^2) + O(N^2) = \Theta(N^2)
\end{equation*}

\subsection{Emergence of $\gamma$ in $T_1 = \Theta(N)$}

We start in the regime $0 \le \alpha, \beta, \gamma < \tfrac{1}{2}$. We show that $\gamma$ is the first to leave this regime at a time $T_1 = O(N)$.

\subsubsection{Dynamics of $\gamma$}

Using $\alpha, \beta < \tfrac{1}{2}$, we get:
\begin{align*}
    s = \exp\!\Bigl(\beta\,\frac{e^{\alpha}}{e^\alpha + 2N - 2}\Bigr) = 1 + O(1/N)
\end{align*}

Using $\gamma < \tfrac{1}{2}$, we get:
\begin{align*}
\frac{\partial \gamma}{\partial t}
\;&=\; 2\,\frac{s}{s + 2N-1} \;-\; 2\gamma\,\frac{s^{2} + 2N - 1}{(s + 2N-1)^{2}} \\
&\ge\; 2\,\frac{s}{s + 2N-1} \;-\; \frac{s^{2} + 2N - 1}{(s + 2N-1)^{2}} \\
&\ge\; 2\,\frac{1 + O(1/N)}{2N + O(1/N)} \;-\; \frac{2N + O(1/N)}{(2N + O(1/N))^{2}} \\
&\ge\; \frac{1 + O(1/N)}{N} \;-\; \frac{2N + O(1/N)}{4N^{2}} \\
&\ge\; \frac{1}{2N} \;+\; O(1/N^2) \\
\end{align*}
\begin{align*}
\frac{\partial \gamma}{\partial t}
\;&=\; 2\,\frac{s}{s + 2N-1} \;-\; 2\gamma\,\frac{s^{2} + 2N - 1}{(s + 2N-1)^{2}} \\
&\le\; 2\,\frac{s}{s + 2N-1} \\
&\le\; 2\,\frac{1 + O(1/N)}{2N + O(1/N)} \\
&\le\; \frac{1}{N} \;+\; O(1/N^2) \\
\end{align*}

This gives us
\begin{align*}
\frac{\partial \gamma}{\partial t}
\;&=\; \Theta(1/N)
\end{align*}

Integrating over time, we obtain:
\begin{align*}
\gamma(T_1) &\;=\; \int_0^{T_1}\frac{\partial \gamma}{\partial t} dt
\;=\; T_1 \, \Theta(1/N)
\end{align*}

Since $\gamma(T_1) = 1/2$, we get that $T_1 = \Theta(N)$.

\subsubsection{Dynamics of $\alpha$ and $\beta$}

We are left to show that the condition $\alpha, \beta < \tfrac{1}{2}$ holds until $T_1$.

\begin{align*}
\frac{\partial \alpha}{\partial t}
\;&=\; \underbrace{\frac{2\,\beta\,(2N-2)\,(2N-1)\,s\,e^{\alpha}}{(e^\alpha + 2N -2)^{2}}}_{O(1)}
  \;\Biggl(\underbrace{\frac{\gamma}{(s + 2N-1)^{2}}}_{O(1/N^2)} - \underbrace{\frac{\gamma^{2}(s-1)}{(s + 2N-1)^{3}}}_{O(1/N^4)} \Biggr)
\\[8pt]
\;&=\; O(1/N^2) \\[20pt]
\frac{\partial \beta}{\partial t}
\;&=\; \underbrace{\frac{2\,(2N-1)\,s\,e^{\alpha}}{e^\alpha + 2N -2}}_{O(1)}
  \;\Biggl(\underbrace{\frac{\gamma}{(s + 2N-1)^{2}}}_{O(1/N^2)} -
  \underbrace{\frac{\gamma^{2}(s-1)}{(s + 2N-1)^{3}}}_{O(1/N^4)} \Biggr)
\\[8pt]
\;&=\; O(1/N^2)
\end{align*}

Integrating over time, we get $\alpha(T_1) = O(1/N)$ and $\beta(T_1) = O(1/N)$. Therefore, for large enough $N$, it is guaranteed that $\alpha$ and $\beta$ will not reach $1/2$ by the time that $\gamma$ does.

\subsubsection{Non-negativity}

For completeness, we also show that parameters are always increasing within this regime, which guarantees that they will never become negative:

\begin{align*}
\frac{\partial \alpha}{\partial t}
\;&=\; \underbrace{\frac{2\,\beta\,(2N-2)\,(2N-1)\,s\,e^{\alpha}\gamma}{(e^\alpha + 2N -2)^{2}}}_{\ge 0}
  \;\Biggl(\underbrace{\frac{1}{(s + 2N-1)^{2}}}_{\substack{\Theta(1/N^2) \\ \ge 0}} - \underbrace{\frac{\gamma(s-1)}{(s + 2N-1)^{3}}}_{O(1/N^4)} \Biggr)
\;\ge\; 0 \\[12pt]
\frac{\partial \beta}{\partial t}
\;&=\; \underbrace{\frac{2\,(2N-1)\,s\,e^{\alpha}\gamma}{e^\alpha + 2N -2}}_{\ge 0}
  \;\Biggl(\underbrace{\frac{1}{(s + 2N-1)^{2}}}_{\substack{\Theta(1/N^2) \\ \ge 0}} -
  \underbrace{\frac{\gamma(s-1)}{(s + 2N-1)^{3}}}_{O(1/N^4)} \Biggr)
\;\ge\; 0 \\[12pt]
\end{align*}

\subsection{Emergence of $\beta$ after $T_2 = \Theta(N^2)$}

We have now entered a new regime where $0 \le \alpha, \beta \le 1/2$ and $1/2 \le \gamma \le 3/2$. We will show that $\beta$ is the first to leave this regime after an additional time $T_2 = \Theta(N^2)$.

\subsubsection{Bounding $\gamma$}

We begin by showing that $\gamma$ remains bounded below $3/2$. We show that $\partial \gamma / \partial t$ would be negative at $\gamma = 3/2$, which implies that $\gamma$ will never go above $3/2$. We use the fact that $s = 1 + O(1/N)$ whenever $\alpha, \beta = O(1)$: 
\begin{align*}
\frac{\partial \gamma}{\partial t}
\;&=\; 2\,\frac{s}{s + 2N-1} \;-\; 2\gamma\,\frac{s^{2} + 2N - 1}{(s + 2N-1)^{2}} \\
\;&=\; 2\,\frac{s}{s + 2N-1} \;-\; 3\frac{s^{2} + 2N - 1}{(s + 2N-1)^{2}} \\
\;&=\; 2\,\frac{1 + O(1/N)}{2N + O(1/N)} \;-\; 3\frac{2N + O(1/N)}{(2N + O(1/N))^{2}} \\
\;&=\; \frac{1 + O(1/N)}{N} \;-\; \frac{3N + O(1/N)}{2N^{2}} \\
\;&=\; -\frac{1}{2N} \;+\; O(1/N^2) \\
\;&<\; 0
\end{align*}

\subsubsection{Dynamics of $\beta$}

Applying the fact that $\gamma = \Theta(1)$ and $s = 1 + O(1/N) = \Theta(1)$ gives us:
\begin{align*}
\frac{\partial \beta}{\partial t}
\;&=\; \underbrace{\frac{2\,(2N-1)\,s\,e^{\alpha}\gamma}{e^\alpha + 2N -2}}_{\Theta(1)}
  \;\Biggl(\underbrace{\frac{1}{(s + 2N-1)^{2}}}_{\Theta(1/N^2)} -
  \underbrace{\frac{\gamma(s-1)}{(s + 2N-1)^{3}}}_{O(1/N^4)} \Biggr)
\;=\; \Theta(1/N^2)
\end{align*}

By integrating, we obtain the value of $\beta$ after $T_2$:
\begin{align*}
    \beta(T_1+T_2) \;=\; \beta(T_1) \;+\;  \int_{T_1}^{T1\,+\,T_2} \frac{\partial \beta}{\partial t}dt \;=\; O(1/N) + T_2 \,\Theta(1/N^2)
\end{align*}

This gives us that $T_2 \,\Theta(1/N^2) = 1/2$, which implies that $T_2 = \Theta(N^2)$.

\subsubsection{Dynamics of $\alpha$}

For completeness, we must establish that $\alpha$ does not become greater than $1/2$ before $\beta$. This comes from the fact that $\beta$ is always increasing at a faster rate than $\alpha$ in this regime:
\begin{align*}
    \frac{\partial \alpha}{\partial t} \;=\; \underbrace{\frac{\beta(2N - 2)}{e^\alpha+2N-2}}_{<\, 1} \; \frac{\partial \beta}{\partial t} 
\end{align*}

\subsection{Emergence of $\alpha$ in $T_3 = O(N^2)$}

We have entered our last regime, which we define using the constraints $0 \le \alpha \le 1/2$, $1/2 \le \gamma \le 3/2$, and $1/2 \le \beta \le 20$.

We know from before that $\gamma$ remains constrained when $\alpha, \beta = \Theta(1)$. We are left to prove that $\alpha$ becomes greater than $1/2$ in a time $T_3 = O(N^2)$ and it does so before $\beta$ becomes greater than the value 20 (chosen arbitrarily to simplify the proofs).

\subsubsection{Dynamics of $\alpha$}

We establish an upper bound on $T_3$ using a lower bound on $\partial \alpha/\partial t$:
\begin{align*}
\frac{\partial \alpha}{\partial t}
\;&=\; 2\,\gamma\beta\,e^{\alpha}\,\underbrace{\frac{(2N-2)\,(2N-1)\,s}{(e^\alpha + 2N -2)^{2}}}_{1+O(1/N)}
  \;\Biggl(\underbrace{\frac{1}{(s + 2N-1)^{2}}}_{1/(4N^2)+O(1/N^3)} - \underbrace{\frac{\gamma(s-1)}{(s + 2N-1)^{3}}}_{O(1/N^4)} \Biggr) \\[8pt]
\;&>\;\frac{1}{8N^2}+O\Bigl( \frac{1}{N^3} \Bigr)
\end{align*}

Integrating over time gives:
\begin{align*}
    \alpha(T_1+T_2+T_3) &\;=\; \alpha(T_1 + T_2) \;+\;  \int_{T_1\,+\,T_2}^{T1\,+\,T_2\,+\,T_3} \frac{\partial \alpha}{\partial t}dt \\[8pt]
    &\;>\; T_3 \Bigg( \frac{1}{8N^2} + O\Big( 1 / N^3 \Big) \Biggl)
\end{align*}

Applying that $\alpha(T_1+T_2+T_3) \;=\; 1/2$ gives us $T_3 < 4N^2 + O(1/N) = O(N^2)$.

\subsubsection{Dynamics of $\beta$}

Finally, we must show that $\beta$ does not reach 20 during $T_3$. We achieve this using an upper bound on $\partial \beta/\partial t$:

\begin{align*}
\frac{\partial \beta}{\partial t}
\;&=\; 2\,e^{\alpha}\,\gamma\;\underbrace{\frac{(2N-1)\,s}{e^\alpha + 2N -2}}_{1+O(1/N)}
  \;\Biggl(\underbrace{\frac{1}{(s + 2N-1)^{2}}}_{1/(4N^2) + O(1/N^3)} -
  \underbrace{\frac{\gamma(s-1)}{(s + 2N-1)^{3}}}_{O(1/N^4)} \Biggr)
\\[8pt]
\;&<\;\frac{3\sqrt{e}}{4N^2} \;+\; O(1/N^3)
\end{align*}

Integrating over time gives:
\begin{align*}
    \beta(T_1+T_2+T_3) &\;=\; \beta(T_1 + T_2) \;+\;  \int_{T_1\,+\,T_2}^{T1\,+\,T_2\,+\,T_3} \frac{\partial \beta}{\partial t}dt \\[8pt]
    &\;<\; \frac{1}{2} \;+\; T_3 \Bigg( \frac{3\sqrt{e}}{4N^2} + O\Big( 1 / N^3 \Big) \Biggl) \\[4pt]
    &\;<\; \frac{1}{2} \;+\; \Bigg( 4N^2 + O\Big( 1 / N \Big) \Biggl) \Bigg( \frac{3\sqrt{e}}{4N^2} + O\Big( 1 / N^3 \Big) \Biggl) \\[4pt]
    &\;<\; \frac{1}{2} \;+\; 3\sqrt{e} + O(1/N) \\
    &\;<\; 5.45 + O(1/N) \\
    &\;<\; 20
\end{align*}


\newpage
\section{Training Details and Additional Figures for \cref{sec:full_dynamics}}
\label{app:model_weights}

We confirm our theoretical result by visualizing the weights during standard training with stochastic gradient descent. We use learning rate $\lambda = 1$ and batch size $B = 512$.

We sample $q \sim \text{unif}\{1,N\}$, and we sample items, labels, and positional embeddings from a multivariate Gaussian:
\begin{gather*}
    (\va_{i})_j \sim \mathcal{N}(0, 1), \quad
    (\vb_{i})_j \sim \mathcal{N}(0, 1),\\
    (\vp_{i})_j \sim \mathcal{N}(0, 1),
\end{gather*}
for all $i \in \{1, \ldots,N\}$ and $j \in \{1, \ldots,D\}$.

\foreach \step in {0100, 0200, 0400} {
    \begin{figure}[H]
        \centering
        \includegraphics[width=0.8\linewidth]{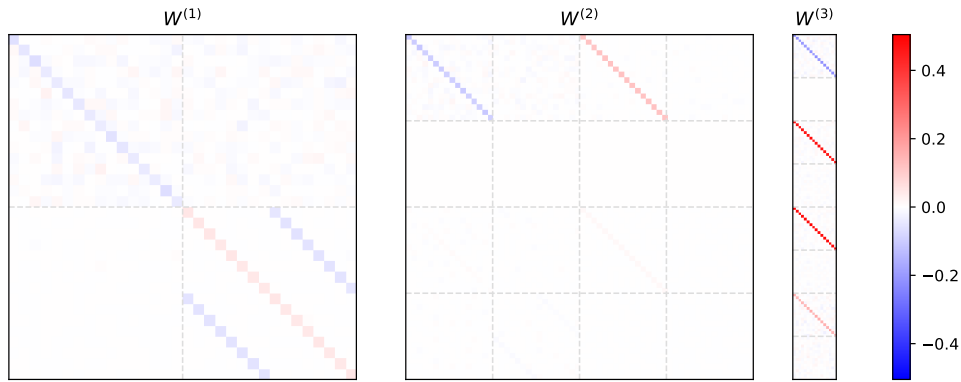}
        \caption{Model weights after $\num{\step}$ training steps with $D = 16$ and $N = 4$.}
        \label{fig:enter-label}
    \end{figure}
}

\newpage
\foreach \step in {0200, 0400, 0800} {
    \begin{figure}[H]
        \centering
        \includegraphics[width=0.8\linewidth]{figures/weights/N=8/\step.pdf}
        \caption{Model weights after $\num{\step}$ training steps with $D = 16$ and $N = 8$.}
        \label{fig:enter-label}
    \end{figure}
}

\newpage
\foreach \step in {0250, 0500, 1000} {
    \begin{figure}[H]
        \centering
        \includegraphics[width=0.8\linewidth]{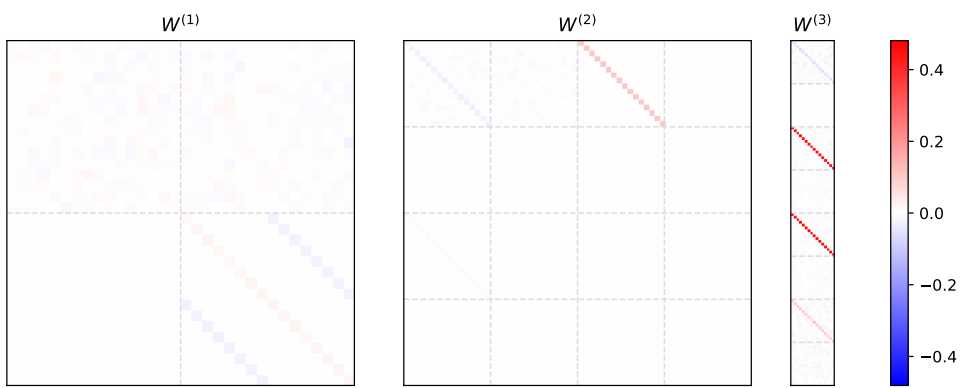}
        \caption{Model weights after $\num{\step}$ training steps with $D = 16$ and $N = 16$.}
        \label{fig:enter-label}
    \end{figure}
}

\newpage
\section{Training Dynamics}
\label{app:dynamics}

As in the main paper, we visualize the pseudo-parameters and loss during standard training, as well as when training only $\alpha_3$, $\beta_2$, and $\gamma_3$. We use $D = 32$, $N =16$, learning rate $\lambda = 1$, and batch size $B=256$. We determine the value of each pseudo-parameter by measuring the magnitude of the parameter vector along the corresponding component.

\begin{figure}[h!]
    \centering
    \includegraphics[width=0.8\linewidth]{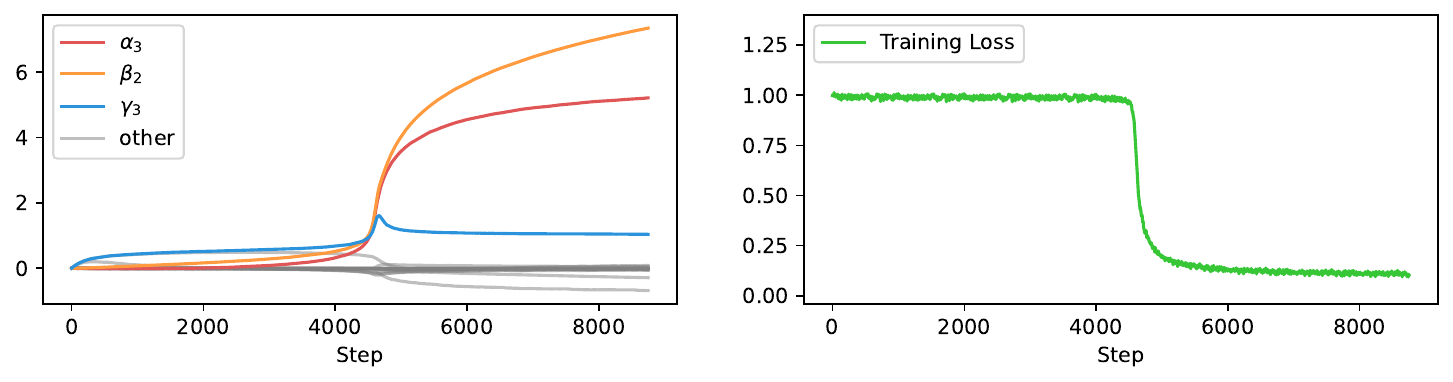}
    \includegraphics[width=0.8\linewidth]{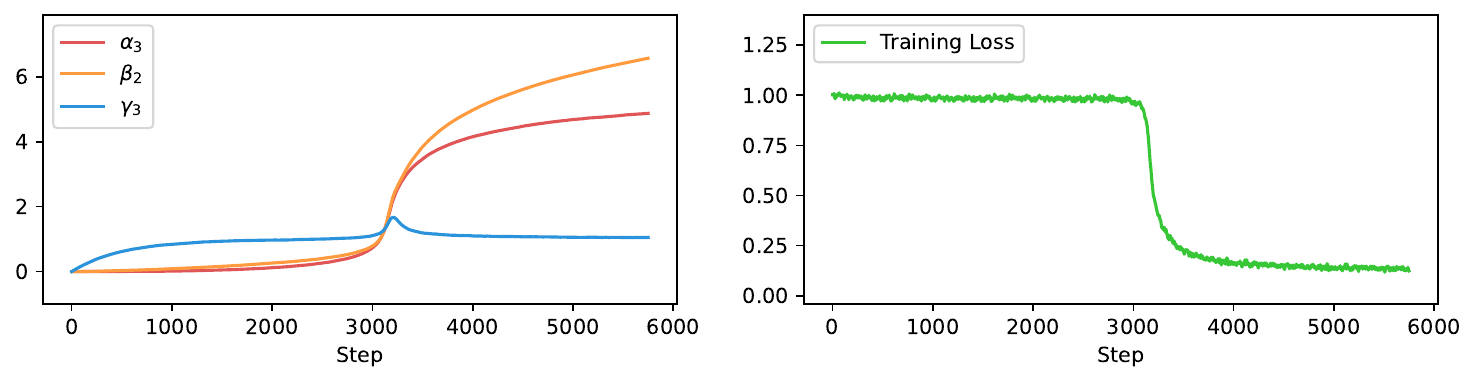}
    \caption{The pseudo-parameters and training loss during training with $D = 16$ and $N = 32$. \\ \emph{Top.} Standard training. \emph{Bottom.} Training only $\alpha_3$, $\beta_2$, and $\gamma_3$.}
    \label{fig:enter-label}
\end{figure}

\begin{figure}[h!]
    \centering
    \includegraphics[width=0.8\linewidth]{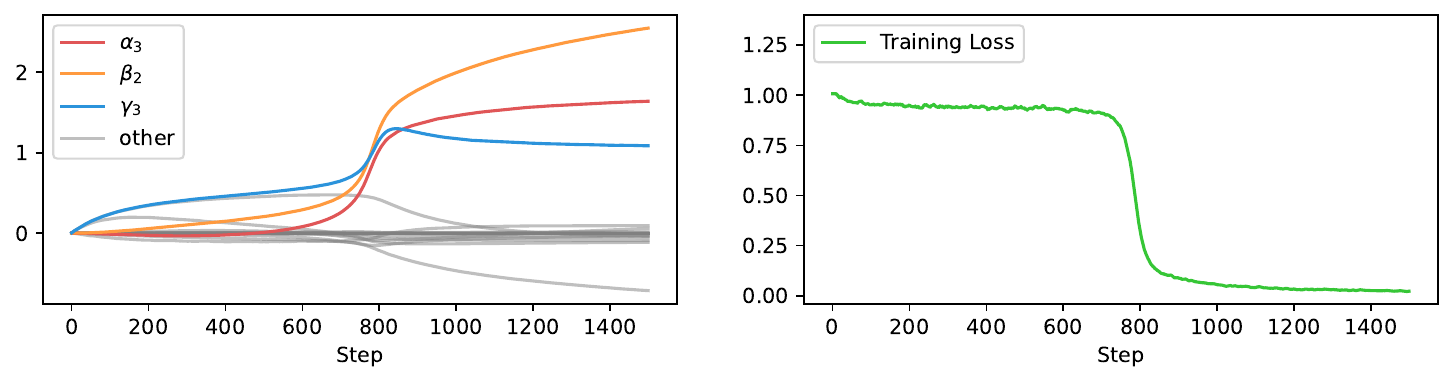}
    \includegraphics[width=0.8\linewidth]{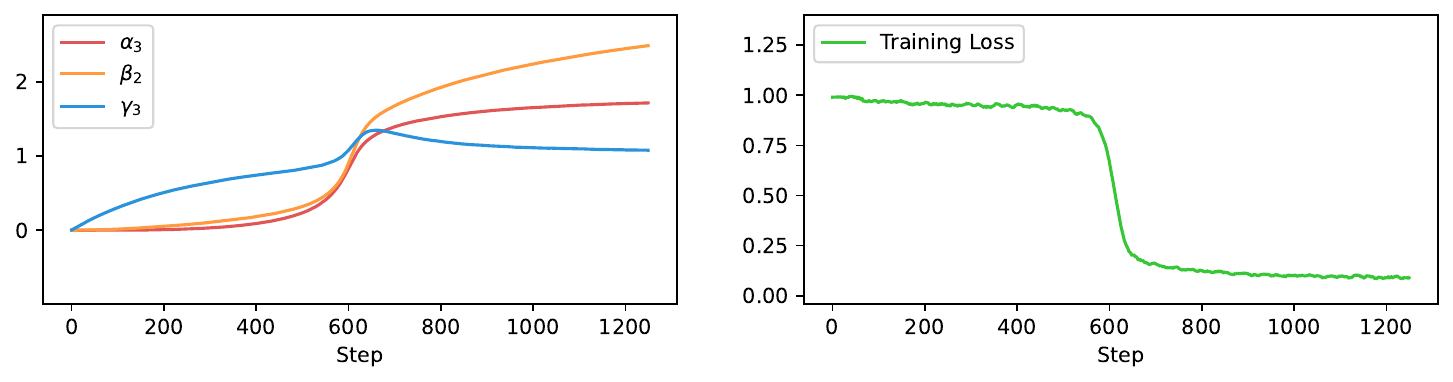}
    \caption{The pseudo-parameters and training loss during training with $D = 32$ and $N = 8$. \\ \emph{Top.} Standard training. \emph{Bottom.} Training only $\alpha_3$, $\beta_2$, and $\gamma_3$.}
    \label{fig:enter-label}
\end{figure}

\newpage
\section{Training Details for Section \ref{sec:standard-transformers}}
\label{app:transformer_training_details}

We train two transformers, a smaller that is used to visualize the weight structure and a larger that is used to visualize the training dynamics.

For visualizing the weight structure in \cref{fig:weights}, we use token and positional embeddings with a vocabulary size of 32, a block size of 32, and an embedding dimension of 2048. Since we have only one head per layer, the head dimension is also 2048. We do not use normalization or weight tying. Following standard practice, we train with AdamW \citep{loshchilov2017decoupled} with learning rate $0.001$ and weight decay $0.01$. We train for 300 steps with 512 sequences per step. Every sequence has length 17 (8 item-label pairs and one query item) and is placed at a random position in the block.  We generate new random sequences for every gradient step as follows: we choose 16 distinct tokens from our vocabulary and group them in item-label pairs; we choose one of the items to be the query; we use the corresponding label as the target output. We construct the input sequences of length 17 by joining the item-label pairs followed by the query item. We use the negative log-likelihood loss applied only at the final position (the query item).

\begin{figure}[h!]
    \centering
    \includegraphics[width=\linewidth]{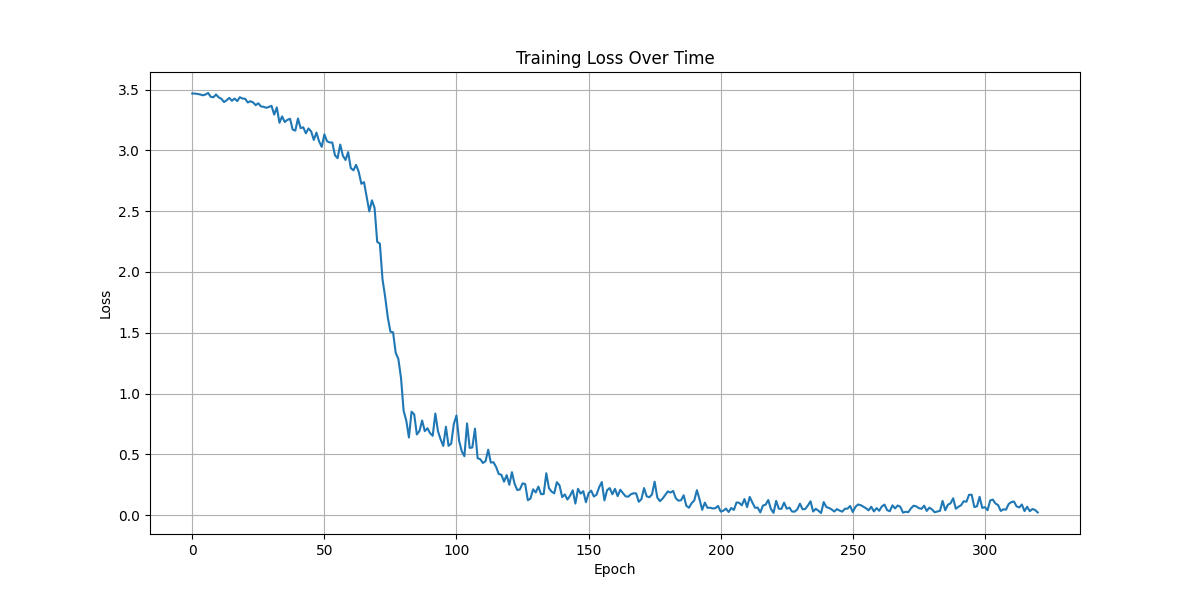}
    \caption{Training loss for the transformer used in \cref{fig:weights}. Note that every batch is generated independently, hence the training loss is also a test loss.}
    \label{fig:enter-label}
\end{figure}

For visualizing the progress measures during training in \cref{fig:progress_measures}, we use token and positional embeddings with a vocabulary size of 64, a block size of 64, and an embedding dimension of 4096. The head dimension is also 4096. We use AdamW with learning rate $0.0001$ and no weight decay, and batch size $4096$. Every sequence has length 33 (16 item-label pairs and one query item).

\newpage
\section{Training Details for Section \ref{sec:induction_head_dynamics}}
\label{app:training_details_ihd}

We empirically validate our theoretical results by measuring the emergence times for different values of $N$. We find that emergence times are in accordance with theoretical predictions. Results are plotted in \cref{fig:emergence}. 
We use $D = 256, \, B=64, \, \lambda = 100$. Following our theoretical assumptions, we use orthonormal inputs, zero initialization, and $q = N$. We constrain the parameters to the 3-dimensional space spanned by $\valpha_3, \vbeta_2,$ and $\vgamma_3$.

\end{document}